\def\eg{\textit{e.g.}}
\newcommand{\fullmodelname}{preconditioned diffusion sampling}
\newcommand{\shortmodelname}{PDS}
\begin{document}

\pagestyle{headings}
\mainmatter

\title{Accelerating Score-based Generative Models with Preconditioned Diffusion Sampling}

\titlerunning{Accelerating SGMs with Preconditioned Diffusion Sampling} 

\author{
  Hengyuan Ma\inst{1} \and
  Li Zhang\inst{1}\thanks{Li Zhang (lizhangfd@fudan.edu.cn) is the corresponding author with School of Data Science, Fudan University.
  H. Ma and J. Feng are with Institute of Science and Technology for Brain-inspired Intelligence, Fudan University.
  X. Zhu is with Surrey Institute for People-Centred Artificial Intelligence, CVSSP, University of Surrey.} \and
  Xiatian Zhu\inst{2}\and
  Jianfeng Feng\inst{1} 
  \vspace{-0.5em} 
}

\authorrunning{H. Ma, L. Zhang, et al.} 

\institute{ 
Fudan University
\and
University of Surrey
\\
\vspace{0.5em} 
\url{https://github.com/fudan-zvg/PDS}
}



\maketitle

\begin{abstract}
Score-based generative models (SGMs) have recently emerged as a promising class of generative models.
However, a fundamental limitation is that their inference is very slow
due to a need for many (\eg, $2000$) iterations of sequential computations.
An intuitive acceleration method is to reduce the sampling iterations which however causes severe performance degradation.
We investigate this problem by
viewing the diffusion sampling process
as a Metropolis adjusted Langevin algorithm, which helps reveal the underlying cause to be ill-conditioned curvature.
Under this insight, we propose a model-agnostic {\bf\em preconditioned diffusion sampling} (PDS) method that leverages matrix preconditioning to alleviate the aforementioned problem.
Crucially, PDS is proven theoretically to converge to the original target distribution of a SGM, no need for retraining.
Extensive experiments on three image datasets with a variety of resolutions and diversity validate that PDS consistently accelerates off-the-shelf SGMs whilst maintaining the synthesis quality. In particular, PDS can accelerate by up to $29\times$ on more challenging high resolution (1024$\times$1024) image generation.

\keywords{Image synthesis, score-based generative model, matrix preconditioning, ill-conditioned curvature.}
\end{abstract}

\begin{figure}[ht]
    \centering
    \begin{minipage}{1\linewidth}
        \textcolor{white}{----------}$T = 2000$\textcolor{white}{----------}$T = 200$\textcolor{white}{----------}$T = 133$\textcolor{white}{----------}$T = 100$\textcolor{white}{----------}$T = 66$
    \end{minipage}
    \rotatebox{90}{\textcolor{white}{------} \quad\quad \quad\quad  Ours \quad\quad\quad\quad\quad\quad\quad\quad\quad Baseline~\cite{song2020score}} 
    \includegraphics[scale=0.15]{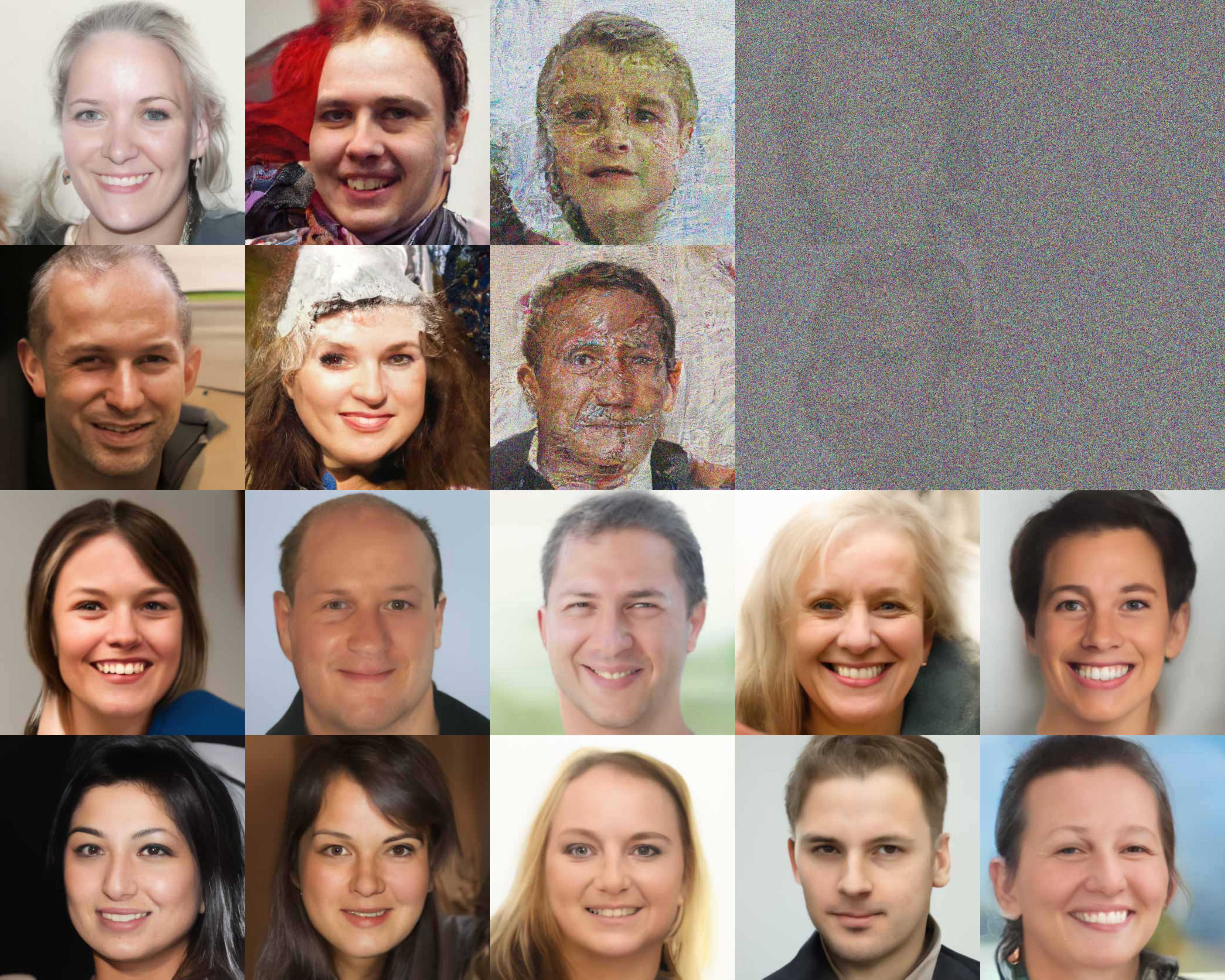}
    \caption{
        Facial images at a resolution of $1024\times 1024$ generated by NCSN++~\cite{song2020score}
        under a variety of sampling iterations
        (top) without and (bottom) with our PDS. 
        It is evident that NCSN++ decades quickly with increasingly reduced sampling iterations,
        which can be well solved with PDS.
        In terms of running speed for generating a batch of 8 images, PDS reduces the time cost from 2030 seconds (the sampling iterations $T=2000$) 
    		to $71$ seconds ($T=66$) on one NVIDIA RTX 3090 GPU,
    		which delivers $29\times$ acceleration.
    		Dataset: FFHQ \cite{karras2019style}.
    		More samples in \ref{sec:example_sm}.
    		}
        \label{fig:ffhq}
\end{figure}

\section{Introduction}
As an alternative framework to generative adversarial networks (GANs)~\cite{goodfellow2014generative}, recent score-based generative models (SGMs)~\cite{song2019generative,DBLP:conf/nips/0011E20,song2020score,song2021maximum} have demonstrated excellent abilities in data synthesis (especially in high resolution images) with easier optimization \cite{song2019generative}, richer diversity \cite{xiao2021tackling}, and more solid theoretic foundation \cite{de2021diffusion}. 
Starting from a sample initialized with a Gaussian distribution, a SGM produces a target sample by simulating a diffusion process, typically a Langevin dynamics.
Compared to the state-of-the-art GANs \cite{DBLP:conf/iclr/BrockDS19,karras2019style,DBLP:conf/iclr/KarrasALL18}, a significant drawback with existing SGMs is {\em drastically slower generation} due to the need of taking many iterations for a sequential diffusion process \cite{song2020score,nichol2021glide,xiao2021tackling}. 
Formally, the discrete Langevin dynamic for sampling is typically formulated as
\begin{align}\label{eq: Langevin_discrete}
    \mathbf{x}_{t} = \mathbf{x}_{t-1} +  \frac{\epsilon_t^2}{2}\bigtriangledown_{\mathbf{x}}\log p^{\ast}(\mathbf{x}_{t-1})  + \epsilon_t \mathbf{z}_t, 1\leq t \leq T
\end{align}
where $\epsilon_t$ is the step size (a positive real scalar), $\mathbf{z}_t$ is an independent standard Gaussian noise, and $T$ is the iteration number. Starting from a standard Gaussian sample $\mathbf{x}_0$, with a total of $T$ steps this sequential sampling process gradually transforms $\mathbf{x}_0$ to the sample $\mathbf{x}_T$ that obeys the target distribution $p^{\ast}$. 
Often, $T$ is at the scale of 1000s, and the entire sampling process is lengthy.

For accelerating the sampling process, a straightforward method is to reduce $T$ by a factor and proportionally expand $\epsilon_t$ simultaneously, so that the number of calculating the gradient $\bigtriangledown_{\mathbf{x}}\log p^{\ast}(\mathbf{x})$, which consumes the major time,
decreases whilst keeping the total update magnitude. 
However, this often makes pretrained 
SGMs fail in image synthesis.
In general, we observe two types of failure:
insufficient detailed structures (left of Fig.~\ref{fig: mnist} and Fig.~\ref{fig:church_ncsnv2}),
and dazzling with heavy noises (left of Fig.~\ref{fig:ffhq} and Fig.~\ref{fig: lsun_ncsnpp}).
Conceptually, the sampling process as defined in Eq.~\eqref{eq: Langevin_discrete} can be considered as a special case of Metropolis adjusted Langevin algorithm (MALA) at the Metropolis-Hastings rejection probability of 100\% \cite{roberts2002langevin,welling2011bayesian,girolami2011riemann}.
When the coordinates of a target sample (\eg,
the pixel locations of a natural image) are strongly correlated,
the isotropic Gaussian noises $\{\mathbf{z}_t\}$ would become {\em inefficient} for the variables $\mathbf{x}$, caused by the {\em ill-conditioned curvature} of the sampling process \cite{girolami2011riemann}. 

\begin{figure}[h]
    \centering
    \includegraphics[scale=1.3]{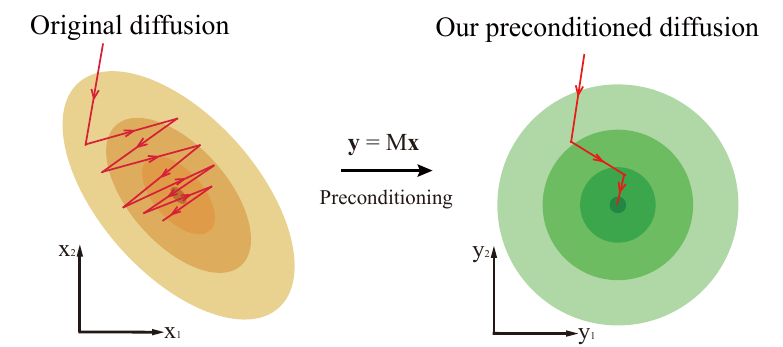}
    \caption{Illustration of the preconditioning method for accelerating sampling process.}
    \label{fig:demo}
\end{figure}

In light of this insight as above, we propose 
an efficient, model-agnostic 
{\bf\em \fullmodelname}~(\shortmodelname) method
for accelerating existing pretrained SGMs without the need for model retraining.
The key idea is that mathematically {\em matrix preconditioning} is effective in substituting a transformation variable in a way that the rates of curvature become more similar along all the directions~\cite{roberts2002langevin,li2016preconditioned}, hence solving the ill-conditioned curvature problem.
Formally, we enrich the above Langevin dynamics (Eq. \eqref{eq: Langevin_discrete}) by imposing a preconditioning operation into the diffusion process as
\begin{align}\label{eq: Langevin_discrete_M}
    \mathbf{x}_{t} = \mathbf{x}_{t-1} +  \frac{\epsilon_t^2}{2}MM^{\mathsf{T}}\bigtriangledown_{\mathbf{x}}\log p^{\ast}(\mathbf{x}_{t-1})  + \epsilon_t M\mathbf{z}_t,
\end{align}
where $M$ is the newly introduced preconditioning matrix designed particularly for regulating the behavior of accelerated diffusion processes.
Concretely, this proposed reformulation equips the diffusion process with 
a novel ability to {\em enhance} or {\em restrain} the generation of detailed structures via controlling the different frequency components\footnote{
More theoretical explanation on why {\em directly} regulating the frequency domain of a diffusion process is possible is provided in \ref{sec:freq_sm} .} of the noises~\cite{BOVIK200997}.
This can be realized in the single formulation (Eq. \eqref{eq: Langevin_discrete_M}) with the $M$ matrix designed flexibly to tackle both failure cases. Crucially, according to the theorems with Fokker-Planck equation~\cite{gardiner1985handbook} our PDS can preserve the original SGM's target distribution.
Further, any structured priors available with a target distribution can be also accommodated, \eg,
the average spatial structures of human faces.
The computational cost of calculating $M$ is marginal when using Fast Fourier Transform (FFT)~\cite{brigham1988fast}.
In this work, we make the following {\bf contributions}:
{\bf (1)} 
We investigate the low inference efficiency problem of off-the-shelf SGMs for high-resolution image synthesis, which is critical yet under-studied in the literature.
{\bf (2)} For sampling acceleration,
we introduce a novel \fullmodelname
~(\shortmodelname) process.
\shortmodelname~reformulates the existing diffusion process
with a preconditioning operation additionally imposed for adaptively regulating the frequency components' amount in the noises,
whilst keeping the original target distributions in convergence.
{\bf (3)}
With \shortmodelname, a variety of pretrained SGMs can be accelerated significantly for image synthesis of various spatial resolutions, without model retraining.
In particular, \shortmodelname~delivers $29\times$ reduction in wall-clock time for high-resolution image synthesis.

\section{Related work}
Sohl-Dickstein et al. \cite{sohl2015deep} first proposed to destroy the data distribution through a diffusion process slowly
and learned the backward process to recover the data, inspired by non-equilibrium statistical physics. 
Later on, Song and Ermon \cite{song2019generative} further explored SGMs by introducing the noise conditional score network (NCSN). 
Song and Ermon \cite{DBLP:conf/nips/0011E20} proposed NCSNv2 that scaled
NCSN for higher resolution image generation (\eg, $256\times 256$)
by scaling noises and improving stability with moving average.
Song et al. \cite{song2020score} summarized all the previous SGMs into a unified framework based on the stochastic differential equation (SDE) and proposed the NCSN++ model to generate high-resolution images via numerical SDE solvers for the first time. Bortoli et al. \cite{de2021diffusion} provided the first quantitative convergence results for SGMs. Vahdat et al. \cite{vahdat2021score} developed Latent Score-based Generative Model (LSGM) that trains SGMs in a latent space with the variational autoencoder framework. 
Another class of relevant generative models, mainly trained by reducing an evidence lower bound (ELBO) called denoising diffusion probabilistic models (DDPMs)~\cite{DBLP:conf/nips/HoJA20,DBLP:conf/icml/NicholD21,song2020denoising,dhariwal2021diffusion,ho2021cascaded,nichol2021glide,bao2022analytic}, also demonstrate excellent performance on image synthesis.
Commonly, all of the above works use isotropic Gaussian distributions for the diffusion sampling.

Recently there are some works proposed on accelerating SGMs.
Dockhorn et al. \cite{dockhorn2021score} improved the SGMs with Hamiltonian Monte Carlo methods~\cite{neal2011mcmc} and proposed critically-damped Langevin diffusion (CLD) based SGMs that achieves superior performance.
Jolicoeur-Martineau et al. \cite{jolicoeur2021gotta} utilized a numerical SDE solver with adaptive step sizes to accelerate SGMs.
However, these methods are limited in the following aspects:
{\bf (1)}
They tend to involve much extra computation. For example, CLD based SGMs expand the dimension of data by $2$ times for learning the velocity of the diffusion. Jolicoeur-Martineau et al. 
\cite{jolicoeur2021gotta} added a high-order numerical solver that increases the number of calling the SGM, resulting in much more time. 
In comparison, with our PDS the only extra calculation relates the preconditioning matrix that can be efficiently implemented by Fast Fourier Transform.
{\bf (2)}
They are restricted to a single specific SGM while 
our PDS is model agnostic.
{\bf (3)}
Unlike this work, none of them has demonstrated a scalability to more challenging high-resolution image generation tasks (\eg, FFHQ facial images).
\section{Preliminary}

\paragraph{\bf Scored-based generative models (SGMs).}
Score matching is developed for non-normalized statistical learning~\cite{hyvarinen2005estimation}. Given i.i.d. samples of an unknown distribution $p^{\ast}$, score matching allows the model to directly approximate the \textit{score function} $\bigtriangledown_{\mathbf{x}}\log p^{\ast}(\mathbf{x})$.
SGMs aim to generate samples from $p^{\ast}$ via score matching by simulating a Langevin dynamics initialized by Gaussian noise
\begin{align}\label{eq: reverse}
        d\mathbf{x} = \frac{g^2(t)}{2}\bigtriangledown_{\mathbf{x}}\log p^{\ast}(\mathbf{x})dt + g(t)d\mathbf{w},
\end{align}
where $g: \mathbb{R}^{+}\rightarrow\mathbb{R}^{+}$ controls the step size and $d\mathbf{w}$ represents a Wiener process.
With this process, we transform a sample drawn from an
initial Gaussian distribution
to approach the desired distribution $p^{\ast}$.
A classical SGM, noise conditional score network (NCSN) \cite{song2019generative}, is trained by learning how to reverse a process of gradually corrupting the samples from $p^{\ast}$, and aims to match the score function. After training, NCSN starts from a Gaussian distribution and travels to the target distribution $p^{\ast}$ by simulating an annealed Langevin dynamics.

\paragraph{\bf Recent improvements.}
Song and Ermon \cite{DBLP:conf/nips/0011E20} presented NCSNv2 that improves the original NCSN by designing better noise scales, iteration number, and step size. This new variant is also more stable by using the moving average technique.
Song et al. \cite{song2020score} further proposed NCSN++ that utilizes an existing numerical solver of stochastic differential equations to enhance both the speed of convergence and the stability of the sampling method. Importantly, NCSN++ can synthesize high-resolution images at high quality.

\paragraph{\bf Limitation analysis.}
Although SGMs have been able to generate images comparable to GANs~\cite{goodfellow2014generative}, they are much slower due to the sequential computation during the sampling phase. For example, to produce $8$ facial images at $1024\times 1024$ resolution, a SGM spends more than 30 mins.
To maximize the potential of SGMs,
it is critical to solve this slow inference bottleneck.
\section{Method}
We aim to solve the slow inference problem with SGMs.
For easier understanding, let us start from the most classical Langevin dynamics.
\subsection{Steady-state distribution analysis}\label{sec: steady}
Consider the classical Langevin dynamics
\begin{align}\label{eq: Langevin}
    d\mathbf{x} = \frac{\epsilon^2}{2}\bigtriangledown_{\mathbf{x}}\log p^{\ast}(\mathbf{x})dt + \epsilon d\mathbf{w},
\end{align}
where $p^{\ast}$ is the target distribution, and $\epsilon >0$ is the fixed step size. It is associated with a Fokker-Planck equation
\begin{align}\label{eq: Langevin_fp}
    \frac{\partial p}{\partial t} = - \frac{\epsilon^2}{2}\bigtriangledown_{\mathbf{x}}\cdot (\bigtriangledown_{\mathbf{x}}\log p^{\ast}(\mathbf{x})p)+\frac{\epsilon^2}{2}\Delta_{\mathbf{x}} p,
\end{align}
where $p=p(\mathbf{x},t)$ describes the distribution of $\mathbf{x}$ that evolves over time.
The steady-state solution of Eq.~\eqref{eq: Langevin_fp} corresponds to the probabilistic density function of the steady-state distribution of Eq.~\eqref{eq: Langevin}, i.e., $p^{\ast}$
\begin{align}\label{eq: Langevin_steady}
   \bigtriangledown_{\mathbf{x}}\cdot (\bigtriangledown_{\mathbf{x}}\log p^{\ast}(\mathbf{x})p) = \Delta_{\mathbf{x}} p.
\end{align}

The Fokker-Planck equation tells us how to preserve the steady-state distribution of the original process when we alter Eq.~\eqref{eq: Langevin} for specific motivations. Concretely, we can impose an invertible linear operator $M$ to the noise term $d\mathbf{w}$ and conduct the associated operation on the gradient term so that the steady-state distribution can be preserved. This design is formulated as:
\begin{align}\label{eq: Langevin_M}
    d\mathbf{x} = \frac{\epsilon^2}{2}(MM^{\mathsf{T}}+S)\bigtriangledown_{\mathbf{x}}\log p^{\ast}(\mathbf{x})dt + \epsilon Md\mathbf{w},
\end{align}
where $S$ is a skew-symmetric linear operator. In fact, we have
\begin{theorem}\label{thm: unchange}
 The steady-state distribution of Eq.~\eqref{eq: Langevin} and Eq.~\eqref{eq: Langevin_M} are the same, as long as the linear operator $M$ is invertible and the linear operator $S$ is skew-symmetric.
\end{theorem}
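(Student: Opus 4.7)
The plan is to mirror the treatment already given for the plain Langevin SDE: write down the Fokker-Planck equation (FPE) associated with the generalized dynamics in Eq.~\eqref{eq: Langevin_M}, and check that the original density $p^{\ast}$ is annihilated by its right-hand side. For an Itô SDE $d\mathbf{x}=\mu(\mathbf{x})\,dt+\Sigma\,d\mathbf{w}$ with drift $\mu(\mathbf{x})=\tfrac{\epsilon^2}{2}(MM^{\mathsf T}+S)\nabla_{\mathbf{x}}\log p^{\ast}(\mathbf{x})$ and constant diffusion $\Sigma=\epsilon M$, the standard FPE reads
\begin{equation*}
\frac{\partial p}{\partial t}=-\nabla_{\mathbf{x}}\!\cdot\!\bigl(\mu(\mathbf{x})\,p\bigr)+\frac{\epsilon^2}{2}\sum_{i,j}(MM^{\mathsf T})_{ij}\,\partial_i\partial_j p.
\end{equation*}
So the first step is simply to instantiate this template; since $M$ and $S$ have no $\mathbf{x}$-dependence, the calculation does not require any product-rule subtleties.

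Next, I would substitute $p=p^{\ast}$ and use the identity $\nabla_{\mathbf{x}}\log p^{\ast}\cdot p^{\ast}=\nabla_{\mathbf{x}}p^{\ast}$ to convert the drift term into $-\tfrac{\epsilon^2}{2}\nabla_{\mathbf{x}}\!\cdot\!\bigl((MM^{\mathsf T}+S)\nabla_{\mathbf{x}}p^{\ast}\bigr)$. Expanding this divergence by linearity and constancy of the coefficient matrices gives $-\tfrac{\epsilon^2}{2}\sum_{i,j}(MM^{\mathsf T}+S)_{ij}\,\partial_i\partial_j p^{\ast}$. The $MM^{\mathsf T}$ piece then cancels the diffusion term verbatim, leaving a residue $-\tfrac{\epsilon^2}{2}\sum_{i,j}S_{ij}\,\partial_i\partial_j p^{\ast}$.

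The key observation — and the one small step that actually uses a hypothesis — is that this residue vanishes: $\partial_i\partial_j p^{\ast}$ is symmetric in $(i,j)$ by equality of mixed partials (assuming the usual smoothness of $p^{\ast}$ implicit in writing down the FPE), while $S_{ij}=-S_{ji}$ by skew-symmetry, so the double sum is zero. This shows $\partial_t p^{\ast}=0$, i.e., $p^{\ast}$ is a steady-state solution of the FPE for the generalized dynamics.

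The remaining point, where the invertibility of $M$ enters, is uniqueness: without it, the diffusion matrix $MM^{\mathsf T}$ could be degenerate and admit other steady distributions in its kernel directions. I would invoke invertibility of $M$ to conclude that $MM^{\mathsf T}$ is strictly positive definite, hence the FPE is uniformly parabolic and the process is ergodic with a unique invariant density, which must therefore coincide with $p^{\ast}$. I expect this ergodicity/uniqueness step to be the main obstacle to state cleanly — the FPE cancellation itself is a one-line use of skew-symmetry, but making precise that \textit{the} steady-state distribution (as opposed to \textit{a}) equals $p^{\ast}$ requires either citing a standard ergodicity result for nondegenerate Itô diffusions or, alternatively, weakening the conclusion to ``$p^{\ast}$ is an invariant density of Eq.~\eqref{eq: Langevin_M}.''
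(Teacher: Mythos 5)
Your proposal is correct and takes essentially the same route as the paper: write down the Fokker--Planck equation of Eq.~\eqref{eq: Langevin_M}, substitute $p=p^{\ast}$, use $p^{\ast}\bigtriangledown_{\mathbf{x}}\log p^{\ast}(\mathbf{x})=\bigtriangledown_{\mathbf{x}}p^{\ast}$ to cancel the $MM^{\mathsf{T}}$ drift contribution against the diffusion term, and annihilate the $S$-term by skew-symmetry --- your coordinate identity $\sum_{i,j}S_{ij}\,\partial_i\partial_j p^{\ast}=0$ merely makes explicit the cancellation the paper asserts in divergence form. Your closing ergodicity/uniqueness discussion is the one place you go beyond the paper, which only verifies that $p^{\ast}$ satisfies the steady-state equation (never actually invoking the invertibility of $M$ in its argument) and declares the theorem proved.
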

\begin{proof}
The Fokker-Planck equation of Eq.~\eqref{eq: Langevin_M} is
\begin{align}
        &\frac{\partial p}{\partial t} = - \frac{\epsilon^2}{2}\bigtriangledown_{\mathbf{x}}\cdot (MM^{\mathsf{T} }\bigtriangledown_{\mathbf{x}}\log p^{\ast}(\mathbf{x})p)  \nonumber\\ &-\frac{\epsilon^2}{2}\bigtriangledown_{\mathbf{x}}\cdot  (S\bigtriangledown_{\mathbf{x}}\log p^{\ast}(\mathbf{x})p)+\frac{\epsilon^2}{2}\bigtriangledown_{\mathbf{x}}\cdot (MM^{\mathsf{T} }\bigtriangledown_{\mathbf{x}} p).
\end{align}
The corresponding steady-state equation is
\begin{align}
\bigtriangledown_{\mathbf{x}}\cdot (MM^{\mathsf{T} }\bigtriangledown_{\mathbf{x}}\log p^{\ast}(\mathbf{x})p)+\bigtriangledown_{\mathbf{x}}\cdot  (S\bigtriangledown_{\mathbf{x}}\log p^{\ast}(\mathbf{x})p) =\bigtriangledown_{\mathbf{x}}\cdot (MM^{\mathsf{T} }\bigtriangledown_{\mathbf{x}} p).  
\end{align}
Set $p=p^{\ast}$, the above equation becomes
\begin{align}
    \bigtriangledown_{\mathbf{x}}\cdot (MM^{\mathsf{T} }\bigtriangledown_{\mathbf{x}}\log p^{\ast}(\mathbf{x})p^{\ast})+\bigtriangledown_{\mathbf{x}}\cdot  (S\bigtriangledown_{\mathbf{x}}\log p^{\ast}(\mathbf{x})p^{\ast}) =\bigtriangledown_{\mathbf{x}}\cdot (MM^{\mathsf{T} }\bigtriangledown_{\mathbf{x}} p^{\ast}).  
\end{align}
The first term in the L.H.S. equals to the R.H.S., since
\begin{align}
     \bigtriangledown_{\mathbf{x}}\cdot (MM^{\mathsf{T} }\bigtriangledown_{\mathbf{x}}p^{\ast} \frac{1}{p^{\ast}}p^{\ast})=     \bigtriangledown_{\mathbf{x}}\cdot (MM^{\mathsf{T} }\bigtriangledown_{\mathbf{x}}p^{\ast}).
\end{align}
Additionally, the second term in the L.H.S. equals to zero, since
 $S$ is skew-symmetric. Then, the steady-state solution of Eq.~\eqref{eq: Langevin_steady} also satisfies the steady-state equation of Eq.~\eqref{eq: Langevin_M}. As a result, the theorem is proved.
\end{proof}
We can extend the above results to a more general case as follows.
\begin{theorem}\label{thm: unchange2}
 Consider the diffusion process
\begin{align}\label{eq: dd_process}
    d\mathbf{x} = \frac{1}{2}G(t)G(t)^{\mathsf{T}}\bigtriangledown_{\mathbf{x}}\log p^{\ast}(\mathbf{x})dt  + G(t) d\mathbf{w},
\end{align}
where $G:\mathbb{R} \rightarrow \mathbb{R}^{d\times d}$. $M$ is an invertible $d\times d$ matrix and $S$ is a skew-symmetric $d\times d$ matrix. Denote $p^{\ast}$ as the steady-state distribution of Eq.~\eqref{eq: dd_process}, then the process
\begin{align}\label{eq: dd_process2}
                d\mathbf{x} = \frac{1}{2}MM^{\mathsf{T}}G(t)G(t)^{\mathsf{T}}\bigtriangledown_{\mathbf{x}}\log p^{\ast}(\mathbf{x})dt+S\bigtriangledown_{\mathbf{x}}\log p^{\ast}(\mathbf{x})dt  + MG(t) d\mathbf{w},
\end{align}
has the same steady-state distribution as that of Eq.~\eqref{eq: dd_process}.
\end{theorem}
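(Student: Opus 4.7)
The plan is to mirror the Fokker--Planck argument from the proof of Theorem~\ref{thm: unchange}, now for a time-dependent matrix diffusion coefficient and with an additional skew-symmetric drift. Concretely: (i) write down the Fokker--Planck PDE for \eqref{eq: dd_process2}; (ii) substitute $p=p^{\ast}$ into the resulting steady-state equation and show every term either cancels or reduces to the steady-state identity already known for \eqref{eq: dd_process}. The only structural facts I will need are the invertibility of $M$, the skew-symmetry of $S$, and the elementary identity $\nabla_{\mathbf{x}}\log p^{\ast}\cdot p^{\ast}=\nabla_{\mathbf{x}}p^{\ast}$.

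First I would derive the Fokker--Planck equation for \eqref{eq: dd_process2}. Since the diffusion coefficient $MG(t)$ is independent of $\mathbf{x}$, the second-order contribution collapses to $\tfrac{1}{2}\nabla_{\mathbf{x}}\cdot(MG(t)G(t)^{\mathsf{T}}M^{\mathsf{T}}\nabla_{\mathbf{x}}p)$, and the drift splits naturally into a symmetric piece $\tfrac{1}{2}MM^{\mathsf{T}}G(t)G(t)^{\mathsf{T}}\nabla_{\mathbf{x}}\log p^{\ast}$ and the skew piece $S\nabla_{\mathbf{x}}\log p^{\ast}$. Plugging $p=p^{\ast}$ into the stationarity equation, the skew-symmetric contribution becomes $\nabla_{\mathbf{x}}\cdot(S\nabla_{\mathbf{x}}p^{\ast})=\sum_{ij}S_{ij}\partial_i\partial_j p^{\ast}$, which vanishes by pairing a skew-symmetric matrix with the symmetric Hessian of $p^{\ast}$, exactly as in the proof of Theorem~\ref{thm: unchange}. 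The symmetric-drift piece, after using $\nabla_{\mathbf{x}}\log p^{\ast}\cdot p^{\ast}=\nabla_{\mathbf{x}}p^{\ast}$, reduces to $\tfrac{1}{2}\nabla_{\mathbf{x}}\cdot(MM^{\mathsf{T}}G(t)G(t)^{\mathsf{T}}\nabla_{\mathbf{x}}p^{\ast})$, and the claim then amounts to matching this against the diffusion divergence $\tfrac{1}{2}\nabla_{\mathbf{x}}\cdot(MG(t)G(t)^{\mathsf{T}}M^{\mathsf{T}}\nabla_{\mathbf{x}}p^{\ast})$.

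The main obstacle I foresee is precisely that last matching step. In full generality $MM^{\mathsf{T}}$ and $G(t)G(t)^{\mathsf{T}}$ need not commute, so $MM^{\mathsf{T}}G(t)G(t)^{\mathsf{T}}\neq MG(t)G(t)^{\mathsf{T}}M^{\mathsf{T}}$ and the two terms are not identical as matrices. I would handle this by working in the practically relevant regime inherited from \eqref{eq: reverse}, namely $G(t)=g(t)I$ with $g$ scalar, in which case $G(t)G(t)^{\mathsf{T}}=g(t)^2 I$ commutes with everything and the two expressions agree verbatim, so the generalised steady-state equation collapses onto the known steady-state equation of \eqref{eq: dd_process}. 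If one wants the anisotropic case, I would instead absorb any residual non-commuting part into the skew-symmetric operator $S$, so that the excess is divergence-free when contracted against $\nabla_{\mathbf{x}}p^{\ast}$; once that bookkeeping is done, the argument of Theorem~\ref{thm: unchange} applies line by line and the theorem follows.
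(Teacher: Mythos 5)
Your proposal follows the same route as the paper's proof: write the Fokker--Planck equation of Eq.~\eqref{eq: dd_process2}, substitute $p=p^{\ast}$, kill the skew contribution via $\bigtriangledown_{\mathbf{x}}\cdot(S\bigtriangledown_{\mathbf{x}}p^{\ast})=\sum_{ij}S_{ij}\partial_i\partial_j p^{\ast}=0$, and match the remaining drift divergence against the diffusion divergence. Where you differ is at the matching step, and your worry there is not pedantry --- it exposes a real imprecision in the paper. The paper's proof writes the steady-state equation of Eq.~\eqref{eq: dd_process2} with diffusion matrix $MM^{\mathsf{T}}G(t)G(t)^{\mathsf{T}}$, but the correct Fokker--Planck diffusion matrix for the noise term $MG(t)\,d\mathbf{w}$ is $MG(t)G(t)^{\mathsf{T}}M^{\mathsf{T}}$; the two induce the same second-order operator exactly when $G(t)G(t)^{\mathsf{T}}$ commutes suitably with $M^{\mathsf{T}}$, which is the commutation condition you identify. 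Your restriction to $G(t)=g(t)I$ is therefore precisely the implicit hypothesis under which the paper's displayed equations, and the theorem itself, are valid --- and it is the regime actually used, since NCSN, NCSNv2 and NCSN++ all have scalar diffusion coefficients, as the paper's own Remark invokes. In that regime your argument and the paper's coincide line by line.

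One caution on your anisotropic fallback: absorbing the mismatch into the skew operator does not work in general. The excess $\Delta(t)=MM^{\mathsf{T}}G(t)G(t)^{\mathsf{T}}-MG(t)G(t)^{\mathsf{T}}M^{\mathsf{T}}$ is skew-symmetric iff $MM^{\mathsf{T}}GG^{\mathsf{T}}+GG^{\mathsf{T}}MM^{\mathsf{T}}=2MGG^{\mathsf{T}}M^{\mathsf{T}}$, which fails generically: take $M=\mathrm{diag}(1,2)$ and $GG^{\mathsf{T}}=\bigl(\begin{smallmatrix}2&1\\1&2\end{smallmatrix}\bigr)$, for which the left side is $\bigl(\begin{smallmatrix}4&5\\5&16\end{smallmatrix}\bigr)$ while the right side is $\bigl(\begin{smallmatrix}4&4\\4&16\end{smallmatrix}\bigr)$. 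Contracting the nonzero symmetric part of $\Delta$ against the Hessian of, say, a Gaussian $p^{\ast}$ gives a nonvanishing residual, so without a commutation hypothesis $p^{\ast}$ genuinely fails to be stationary for Eq.~\eqref{eq: dd_process2}; moreover $S$ in the statement is a constant matrix while $\Delta$ inherits time dependence from $G(t)$, so no choice of $S$ can cancel it uniformly. The honest conclusion, which your main-line proof already delivers, is that the theorem holds under scalar (or commuting) $G$, and your proof of that case is correct and essentially the paper's, executed with more care.
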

\begin{proof}
The steady-state distribution of Eq.~\eqref{eq: dd_process} satisfies 
\begin{align}\label{eq: dd_process_steady}
    \bigtriangledown_{\mathbf{x}}\cdot (G(t)G(t)^{\mathsf{T}}\bigtriangledown_{\mathbf{x}}\log p^{\ast}(\mathbf{x})p) =\bigtriangledown_{\mathbf{x}}\cdot (G(t)G(t)^{\mathsf{T}}\bigtriangledown_{\mathbf{x}} p).  
\end{align}
and the steady-state distribution of Eq.~\eqref{eq: dd_process2} satisfies
\begin{align}
    &\bigtriangledown_{\mathbf{x}}\cdot (MM^{\mathsf{T }}G(t)G(t)^{\mathsf{T}}\bigtriangledown_{\mathbf{x}}\log p^{\ast}(\mathbf{x})p)+\bigtriangledown_{\mathbf{x}}\cdot  (S\bigtriangledown_{\mathbf{x}}\log p^{\ast}(\mathbf{x})p) \\
    &=\bigtriangledown_{\mathbf{x}}\cdot (MM^{\mathsf{T} }G(t)G(t)^{\mathsf{T}}\bigtriangledown_{\mathbf{x}} p). 
\end{align}
Using the skew-symmetry of $S$, it is easy to find that $p^{\ast}$
satisfies both Eq.~\eqref{eq: dd_process} and Eq.~\eqref{eq: dd_process2}. Therefore, the theorem is proved.
\end{proof}
\begin{remark}
The conditions of this theorem are all satisfied for the diffusion process used in NCSN, NCSNv2, and NCSN++.
\end{remark}

{\color{black}Thm.~\ref{thm: unchange2} motivates us to design a preconditioning matrix as Eq.~\eqref{eq: Langevin_M} while keeping the steady-state distribution simultaneously.}
{\color{black}This is also because, preconditioning has been proved to be able to significantly accelerate the stochastic gradient descent algorithm (SGD) and Metropolis adjusted Langevin algorithms (MALA)~\cite{roberts2002langevin}.
Besides, SGD provides another view for interpreting our method, that is, SGMs sequentially reduce the energy ($-\log p^{\ast}(\mathbf{x})$) of a sample $\mathbf{x}$ via stochastic gradient descent, with the randomness coming from the Gaussian noises added at every single step.}

\subsection{Preconditioned diffusion sampling}
{\color{black} We study how to construct the preconditioning operator using $M$ to accelerate the sampling phase of SGMs, with $S = 0$ for Eq. \eqref{eq: Langevin_M}.}
It is observed that when reducing the iteration number for the sampling process of a SGM and expand the step size proportionally for a consistent accumulative update, the images generated tend to miss necessary detailed structures (see left of Fig.~\ref{fig: mnist} and Fig.~\ref{fig:church_ncsnv2}), or involve high-frequency noises (left of Fig.~\ref{fig:ffhq} and Fig.~\ref{fig: lsun_ncsnpp}).
These failure phenomena motivates us to leverage a preconditioning operator $M$ serving as a filter to regulate the frequency distribution of the samples.
\begin{enumerate}
    \item Given an input vector $\mathbf{x}$, we first use Fast Fourier Transform (FFT)~\cite{brigham1988fast} to map it into the frequency domain $\hat{\mathbf{x}} = F[\mathbf{x}]$. For images, we adopt the 2D FFT that implements 1D FFT column-wise and row-wise successively.
    
    \item Then we adjust the frequency signal using a mask $R$ in the same shape as $\mathbf{x}$: $R\odot\hat{\mathbf{x}}$, where $\odot$ means element-wise multiplication. The elements of $R$ are all positive.
    
    \item Lastly, we map the vector back to the original space by the inverse of Fast Fourier Transform: $F^{-1}[R\odot\hat{\mathbf{x}}]$. 
\end{enumerate}

For specific tasks (\eg, human facial image generation), most samples might share 
a consistent structural characteristics. This prior knowledge however is unavailable
with the noises added to each step in the diffusion process.
To solve this problem, we further propose a space structure filter $A$ for {\bf \em space preconditioning},
constructed by statistical average of random samples.
This can be used to regulate the noise via element-wise multiplication as:
$A\odot[\cdot]$.
Combining the both operations above,  
we define a preconditioning operator $M$ as 
\begin{align} \label{eq:precondition_op}
    M[\cdot] = A\odot F^{-1}[R\odot F[\cdot]].
\end{align}
To guarantee the invertibility of $M$, we set the elements of $R$ strictly positive. 
For the tasks without clear space structure priors, 
we simply do not apply the space preconditioning by setting all the elements of $A$ to $1$. 
We operate $M$ on the noise term $d\mathbf{w}$ and adjust the gradient term to keep the steady-state distribution as shown in Eq.~\eqref{eq: Langevin_M}, utilizing Thm.~\ref{thm: unchange}.

Interestingly, we found that the proposed method above
is likely to even cause further model degradation.
This is because, if we implement a variable transformation as $\mathbf{y} = M^{-1}\mathbf{x}$,
Eq.~\eqref{eq: Langevin_M} can be rewritten as 
\begin{align}
    d\mathbf{y} = \frac{\epsilon^2}{2}\bigtriangledown_{\mathbf{y}}\log p^{\ast}(\mathbf{y})dt + \epsilon d\mathbf{w},
\end{align}
which returns to the same format as the original process. 
The diffusion process is made worse since, $M^{-1}$, the inverse of $M$, could impose the exactly opposite effect of $M$.
To overcome this challenge, we further substitute $M$ with $M^{-1}$ in Eq.~\eqref{eq: Langevin_M} in order to take the positive effect of $M$ as
\begin{align}\label{eq: accelerate2}
    d\mathbf{x} = \frac{\epsilon^2}{2}M^{-1}M^{\mathsf{-T}}\bigtriangledown_{\mathbf{x}}\log p^{\ast}(\mathbf{x})dt + \epsilon M^{-1}d\mathbf{w}.
\end{align}
Since in this case, we can rewrite Eq.~\eqref{eq: accelerate2} in the same format as the original process, after applying the variable transformation $\mathbf{y} = M\mathbf{x}$. 

\noindent{\bf A general formulation.}
For theory completeness, 
we further briefly discuss the possibility to construct preconditioning matrix using the matrix $S$ (Eq. \eqref{eq: Langevin_M}) as an accelerator of the diffusion process.
{\bf \em Note, this is merely a theoretical extension of our main model PDS as formulated above.}

This is motivated by the theories from \cite{ottobre2016markov,rey2015irreversible,lelievre2013optimal} that the term $S\bigtriangledown_{\mathbf{x}}\log p^{\ast}(\mathbf{x})dt$ drives a solenoidal flow that makes the system converge faster to the steady state.
According to~\cite{hwang2005accelerating}, under the regularity conditions, $\left | \mathbf{x}(t)\right |$ usually does not reach the infinity in a finite time, and the convergence of an {\em autonomous} (the right side of the equation does not contain time explicitly) diffusion process 
\begin{align}
    d\mathbf{x} = \frac{\epsilon^2}{2}\bigtriangledown_{\mathbf{x}}\log p^{\ast}(\mathbf{x})dt + \epsilon d\mathbf{w}
\end{align}
can be accelerated by introducing a vector field $C(\mathbf{x})\in \mathbb{R}^d \rightarrow \mathbb{R}^d$
\begin{align}
    d\mathbf{x} = \frac{\epsilon^2}{2}\bigtriangledown_{\mathbf{x}}\log p^{\ast}(\mathbf{x})dt +C(\mathbf{x})dt  + \epsilon d\mathbf{w},
\end{align}
where $C(\mathbf{x})$ should satisfy
\begin{align}
    \bigtriangledown_{\mathbf{x}}\cdot(\frac{C(\mathbf{x})} {p^{\ast}(\mathbf{x})}) = 0.
\end{align}
It is easy to show that $C(\mathbf{x}) =S\bigtriangledown_{\mathbf{x}}\log p^{\ast}(\mathbf{x})$ satisfies the above condition.
However, the diffusion process of existing SGMs is typically {\em not autonomous}, due to the step size $\epsilon$ varies across time designed to guarantee numerical stability. Despite this, we consider it is still worth investigating the effect of $S$ for the sampling process for completeness
(see evaluation in  Sec.~\ref{sec: exp}).
As such, our investigation of preconditioning matrix is expanded from the invertible symmetric matrix in form of $MM^{\mathsf{T}}$, to more general cases where preconditioning matrices can be written as $MM^{\mathsf{T}}+S$.

\subsection{Instantiation of preconditioned diffusion sampling}
We summarize our \textbf{preconditioned diffusion sampling} (PDS) method for accelerating the diffusion sampling process in Alg.~\ref{algo: tdas2}.
For generality, we write the original diffusion process as 
\begin{equation}
    \mathbf{x}_t =\mathbf{h}(\mathbf{x}_{t-1},t)+\phi(t)\mathbf{z}_t,
\end{equation}
where $\mathbf{h}(\mathbf{x}_{t-1},t)$ represents the drift term and $\phi(t)$ the function controlling the scale of the noise $\mathbf{z}_t$.
We take the real part whilst dropping the imaginary part generated every step as it can not be utilized by the SGMs. Now we construct the space and frequency preconditioning filter.
Given a target dataset image with distribution $p^\ast$, its space preconditioning filter $A$ is calculated as
\begin{align}\label{eq:space_filter}
     A(c,w,h) = \log\big(\mathbb{E}_{\mathbf{x}\sim p^\ast(\mathbf{x})}\left [\mathbf{x}(c,w,h)\right])+1\big),
\end{align}
where $1\leq c \leq C,1\leq w \leq W,1\leq h \leq H$ are the channel, width and height dimensions of image. In practice, we also normalize $A$ for a stability
\begin{align}\label{eq:space_filter_norm}
     A(c,w,h) = \frac{A(c,w,h)}{\max A(c,w,h)}.
\end{align}
There are two approaches for calculating the filter $R$. 
The first approach is to utilize the statistics of the dataset. 
Specifically, we first define the frequency statistics given a specific image dataset that we are aimed to synthesize as
\begin{align}\label{eq:stats1}
    R(c,w,h) = \log\big(\mathbb{E}_{\mathbf{x}\sim p^\ast(\mathbf{x})}\left [ F[\mathbf{x}]\odot \overline{F[\mathbf{x}]} \right](c,w,h)+1\big)
\end{align}
where $F$ is  Discrete Fourier Transform, $\odot$ is the element-wise multiplication.
We then set
\begin{equation}\label{eq:stats2}
   R(c,w,h) = \frac{1}{\alpha}(\frac{R(c,w,h)}{\max R(c,w,h) }+\alpha-1),
\end{equation}
where $\alpha$ is the normalization parameter. 
This allows us to adaptively scale the frequency coordinates according to the specific amplitudes. 
Empirically, $200$ images randomly sampled from the dataset is enough for estimating this statistics, therefore this involves marginal extra computation.
We observe that this approach works well for accelerating NCSN++~\cite{song2020score}, but has less effects on accelerating NCSN~\cite{song2019generative} and NCSNv2~\cite{DBLP:conf/nips/0011E20}. The possible reason is that these two models are not sophisticated enough as NCSN++ to utilize the delicate information from the frequency statistics.
To address this issue, we propose the second approach which constructs the filter $R$ simply using two parameters as follows
\begin{align}\label{eq: freq_mask}
R(c,h,w) = 
\left\{\begin{matrix}
 1&,& \text{if}\;\; (h-0.5H)^2+(w-0.5 W)^2 \leq 2 r^2\\
\lambda&,&\text{otherwise}
\end{matrix}\right.,
\end{align}
where $C$ is the channel number, $H$ is the height, and $W$ is the width of an image.
$1\leq c \leq C$, $1\leq h \leq H$ and $1 \leq w \leq W$.
The parameter $\lambda > 0 $ specifies the ratio for shrinking or amplifying the coordinates located out of the circle $\{(h-0.5H)^2+(w-0.5 W)^2 \leq 2 r^2 \}$, selected according to the failure behaviour of the vanilla SGM. The radial range of the filter is controlled by $r$. 
An example of $R$ is given in Fig.~\ref{fig:freq_mask}. This method works well on accelerating NCSN~\cite{song2019generative} and NCSNv2~\cite{DBLP:conf/nips/0011E20}.

\begin{figure}[h]
\centering
    \centerline{\includegraphics[scale=0.23]{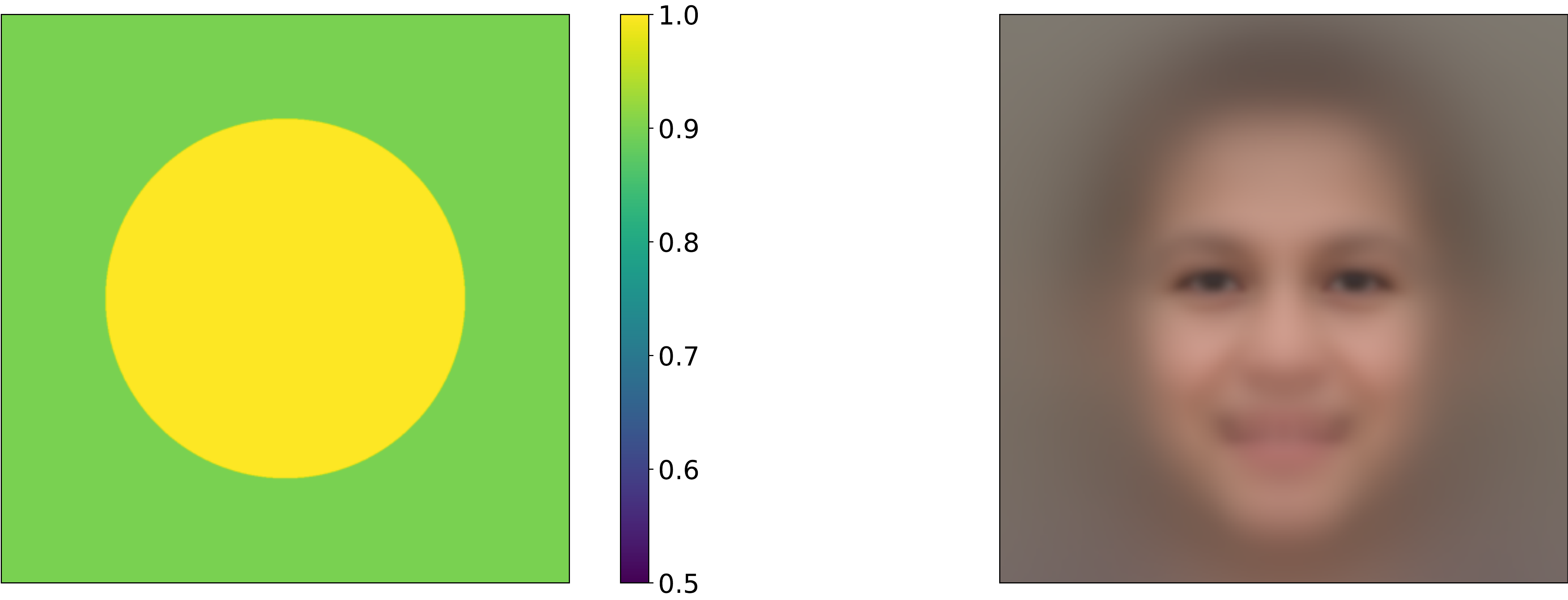}}
    \caption{Examples of (\textbf{Left}) frequency preconditioning $R$ ($(r,\lambda) = (0.2H,0.9)$) and (\textbf{Right}) mean of FFHQ~\cite{karras2019style} dataset used for constructing space preconditioning $A$ used in proposed preconditioning operator $M$ (Eq. \eqref{eq:precondition_op}). 
    }
    \label{fig:freq_mask}
\end{figure}

\begin{algorithm}[tb]
  \caption{Preconditioned diffusion sampling}
  \label{algo: tdas2}
\begin{algorithmic}
  \State {\bfseries Input:} 
  The frequency $R$ and space $A$ preconditioning operators, the target sampling iterations $T$; \\ 
  \State {\bfseries Diffusion process:}
  \State Drawing an initial sample $\mathbf{x}_0 \sim \mathcal{N}(0,I_{C\times H \times W})$
  \For{$t=1$ {\bfseries to} $T$} 
      \State Drawing a noise $\mathbf{w}_t\sim \mathcal{N}(0,I_{C\times H \times W})$
      \State {\em Applying PDS}: $\bm{\eta}_t \leftarrow F^{-1}[F[ \mathbf{w}_t  \bullet A] \bullet R ]$ \Comment{$\bullet$ means element-wise division}
        \State Calculating the drift term $\mathbf{d}_t \leftarrow \mathbf{h}(\mathbf{x}_{t-1},t,\epsilon_t)$
        \State {\em Applying PDS}: $\mathbf{d}_t \leftarrow F^{-1}[F[ F^{-1}[F[\mathbf{d}_t] \bullet R] \bullet A^2] \bullet R ]$
        \State Calculating the solenoidal term $S_t \leftarrow S\bigtriangledown_{\mathbf{x}}\log  p^{\ast}(\mathbf{x}_{t-1})$
        \State Diffusion $\mathbf{x}_{t} \leftarrow Re[\mathbf{d}_t+S_t+\phi(t)\eta_t]$ \Comment{$Re[\cdot]$ means taking the real part}
  \EndFor
  \\ 
  \State {\bfseries Output:} $\mathbf{x}_T$
\end{algorithmic}
\end{algorithm}

\begin{remark}
For the computational complexity of PDS, the major overhead is from FFT and its inverse that only have the complexity of $O(CHW(\log H+\log W))$~\cite{brigham1988fast}, which is neglectable compared to the whole diffusion complexity.
\end{remark}
\section{Experiments}\label{sec: exp}
In our experiments, the objective is to show how off-the-shelf SGMs can be accelerated significantly with the assistance of the proposed PDS whilst keeping the image synthesis quality,
without model retraining. See \ref{sec:para_sm} for the detailed parameter settings for all the experiments.

\paragraph{\bf Datasets.}  
For image synthesis, we use MNIST, CIFAR-10~\cite{krizhevsky2009learning}, LSUN (the tower, bedroom and church classes)~\cite{yu2015lsun}, and FFHQ~\cite{karras2019style} datasets.
Note, for all these datasets, the image height and width are identical, i.e., $H=W$.

\paragraph{\bf Baselines.}
For evaluating the model agnostic property of our PDS, we test three recent SGMs including NCSN~\cite{song2019generative}, NCSNv2~\cite{DBLP:conf/nips/0011E20} and NCSN++~\cite{song2020score}.

\paragraph{\bf Implementation.}
We use the public released codebases of NCSN, NCSNv2 and NCSN++.
For facilitating the comparisons, we follow the same preprocessing as~\cite{song2019generative,DBLP:conf/nips/0011E20,song2020score}.
We conduct all the following experiments with PyTorch on NVIDIA RTX 3090 GPUs.

\paragraph{\bf Experiments on MNIST.}
We use NCSN~\cite{song2019generative} as the SGM for the simplest digital image generation ($28\times 28$). 
The results are shown in Fig.~\ref{fig: mnist}. 
We observe that when reducing the sampling iterations from 1000 to {\em 20} for acceleration, the original sampling method tends to generate images that lack the digital structure (see the left part of Fig.~\ref{fig: mnist}). This suggests us to enlarge a band of frequency part of the diffusion process.
Therefore, we set $(r,\lambda)=(0.2H,1.6)$. 
It is observed that our PDS can produce digital images with the fine digital structure well preserved under the acceleration rate.
\begin{figure}[h]
    \begin{minipage}{0.495\linewidth}
        \centering
        \includegraphics[scale=0.22]{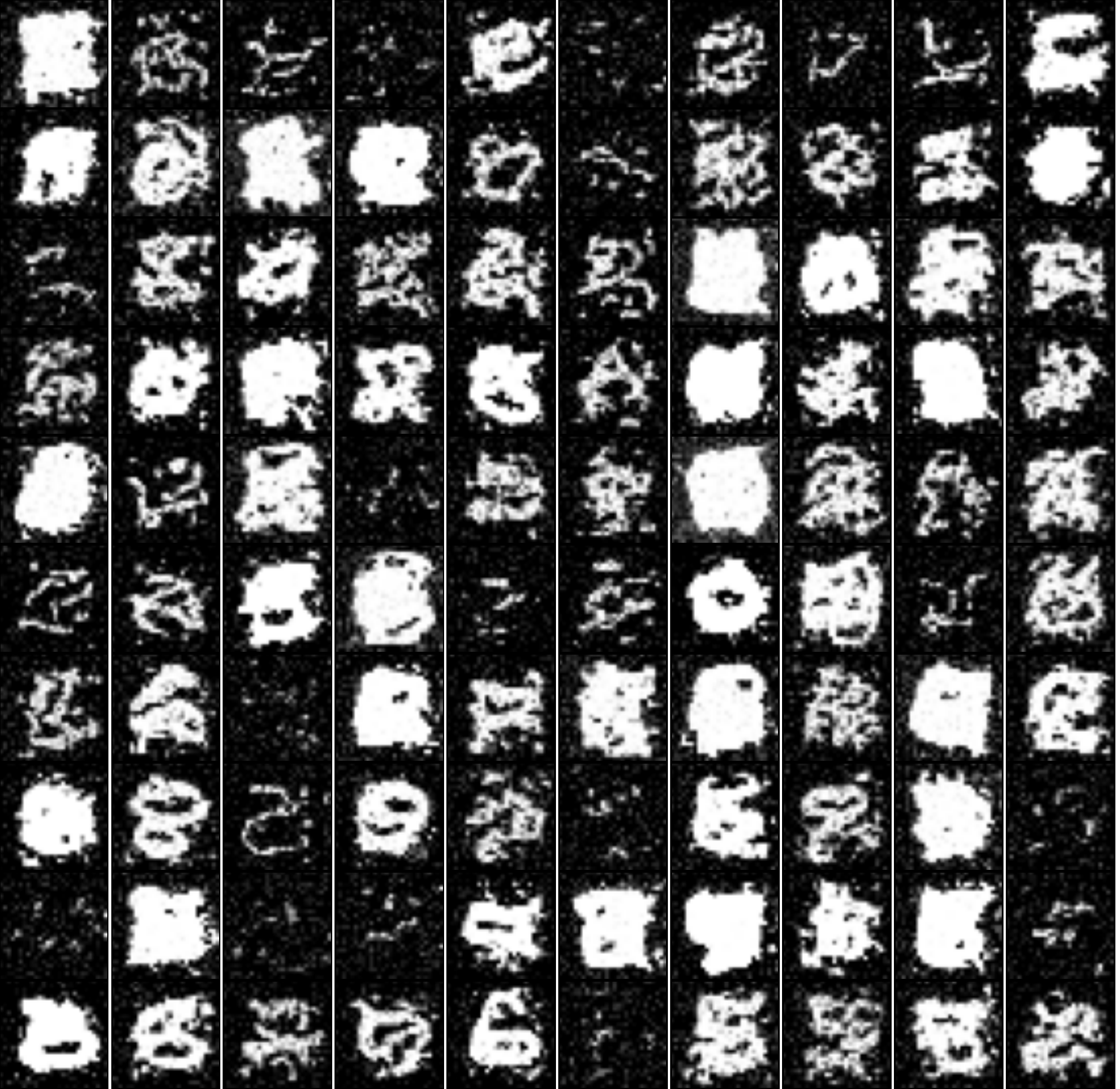}
    \end{minipage}
    \begin{minipage}{0.495\linewidth}
        \centering
        \includegraphics[scale=0.22]{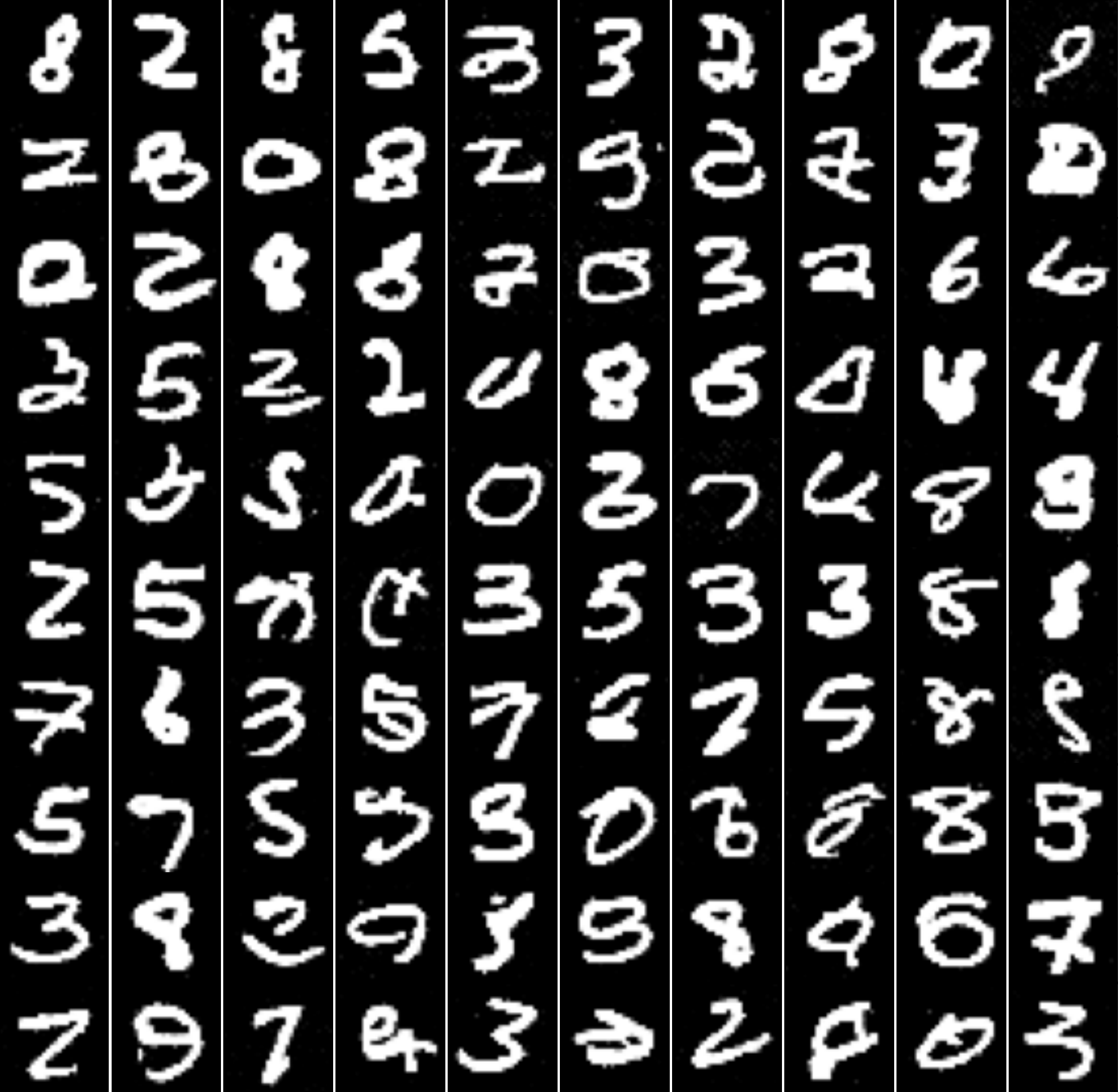}
    \end{minipage}
        \caption{Sampling using NCSN~\cite{song2019generative} on MNIST ($28 \times 28$). \textbf{Left}: Results by the original sampling method with $20$ sampling iterations. \textbf{Right}: Results by our PDS with $20$ sampling iterations. More samples in supplementart material.
        }
        \label{fig: mnist}
\end{figure}

\paragraph{\bf Experiments on CIFAR-10.}
Compared to DDPMs, SGMs have much worse performance when the number of sample iterations is relatively small. Our PDS can greatly alleviate this issue as shown in Table.~\ref{tab:cifar}, where we evaluate NCSN++ for generating CIFAR-10 ($32\times 32$) by FID~\cite{heusel2017gans} score. We compare PDS with DDIM~\cite{song2020denoising} and the Analytic-DDIM~\cite{bao2022analytic}, two representative DDPMs. It is observed that NCSN++ with PDS achieves the best FID scores under different acceleration cases. We apply filter $R$ described by Eq.~\eqref{eq:stats1}.

\begin{table}[h]
\centering
\setlength{\tabcolsep}{1.4mm}{
\caption{FID scores of vanilla NCSN++~\cite{song2020score}, NCSN++ with PDS, DDIM~\cite{song2020denoising},  and Analytic-DDIM~\cite{bao2022analytic} under different iterations on CIFAR-10.}\label{tab:cifar}
\begin{tabular}{ccccc}
\toprule[1.5pt]
$T$ & NCSN++ & DDIM  & Analytic-DDIM  & NCSN++ W/ PDS \\ \midrule
100        & 29.39  & 6.08  & 3.55     & \textbf{3.26}         \\
200        & 4.35  & 4.02   & 3.39   & \textbf{2.61}          \\
\bottomrule[1.5pt]
\end{tabular}}
\end{table}

\paragraph{\bf Experiments on LSUN~\cite{yu2015lsun}.} 
We first evaluate NCSNv2~\cite{DBLP:conf/nips/0011E20} to generate church images at a resolution of $96\times 96$ and tower at a resolution of $128\times 128$.
For both classes, when accelerated by reducing the iterations from original $3258$ to {\em 108} for tower and from original $3152$ to {\em 156} for church, we observe that the original sampling method tends to generate images {\em without sufficient detailed appearance}, similar as the situation on MNIST. Therefore, we also encourage the frequency part of the diffusion process that responsible for the details. The results are displayed in Fig.~\ref{fig:church_ncsnv2}. It is evident that PDS can still generate rich fine details, even when the diffusion process is accelerated up to $20\sim30$ times.

Further, we evaluate NCSN++ \cite{song2020score} to generate bedroom and church images at a resolution of $256\times 256$. 
In this case, we instead observe that the original sampling method tends to generate images {\em with overwhelming noises} once accelerated (left of Fig.~\ref{fig: lsun_ncsnpp}). 
We hence set filter $R$ using Eq.~\eqref{eq:stats1} to regulate the frequency part of the diffusion process.
As demonstrated in Fig.~\ref{fig: lsun_ncsnpp}, our PDS is able to prevent the output images from being ruined by heavy noises.
All these results suggest the ability of our PDS in regulating the different frequency components in the diffusion process of prior SGMs.

\paragraph{\bf Experiments on FFHQ~\cite{karras2019style}.} 
We use NCSN++~\cite{song2020score} to generate high-resolution facial images at a resolution of $1024\times 1024$. Similar as on LSUN, we also find out that when accelerated, the original sampling method is vulnerable with heavy noises and fails to produce recognizable human faces. For example, when reducing the iteration from original $2000$ to $100$, the output images are full of noises and unrecognizable.
Similarly, we address this issue with our PDS with
filter $R$. We also apply the space preconditioning to utilize the structural characteristics shared across the whole dataset.
It is shown in Fig.~\ref{fig:ffhq}, PDS can maintain the image synthesis quality using only as less as {\em 66} iterations.
In summary, all the above experiments indicate that our method is highly scalable and generalizable across different visual content, SGMs, and acceleration rates.

\begin{figure}[h]
    \begin{minipage}{0.233\linewidth}
        \centering
        \includegraphics[scale=0.25]{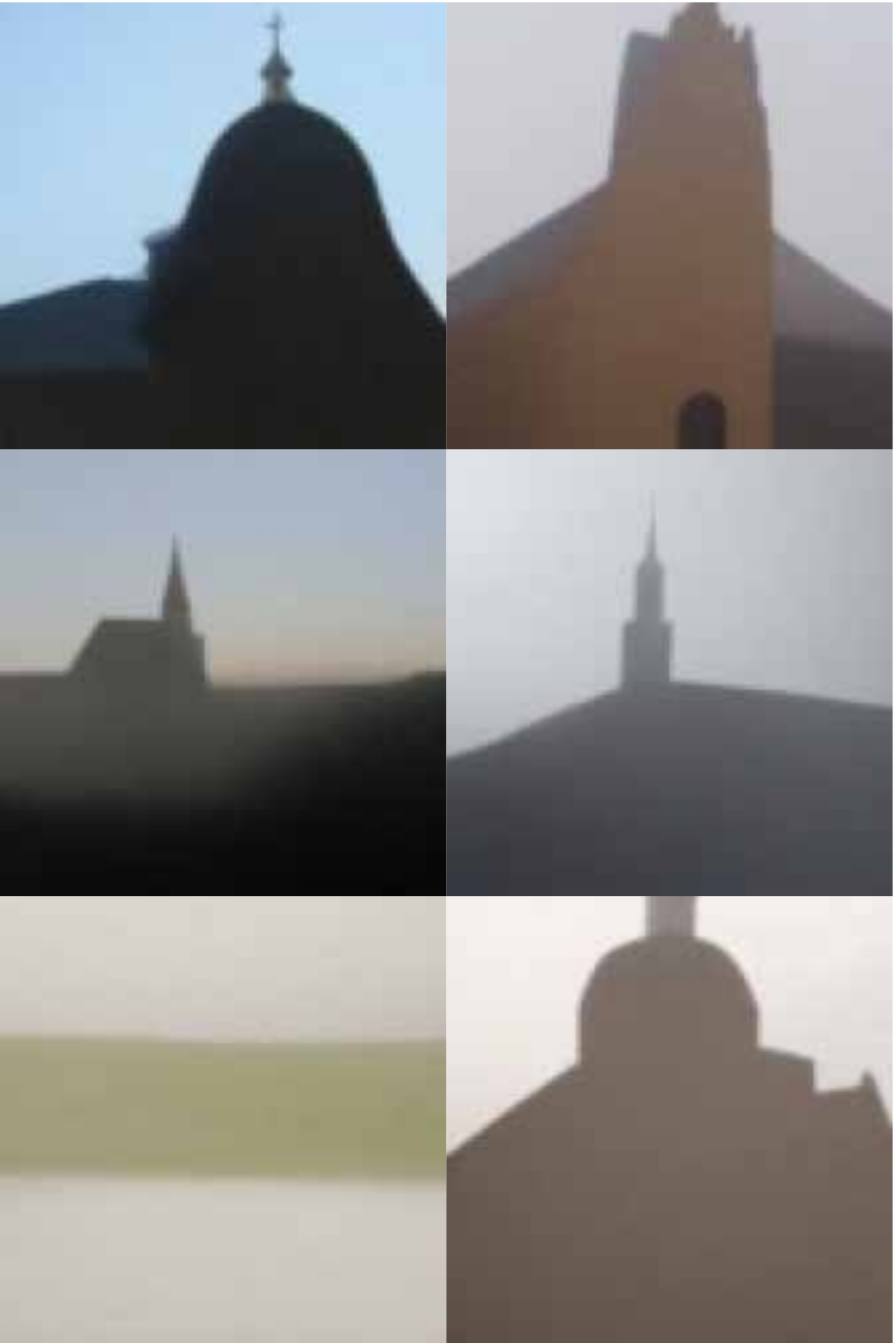}
    \end{minipage}
    \begin{minipage}{0.233\linewidth}
    \centering
    \includegraphics[scale=0.25]{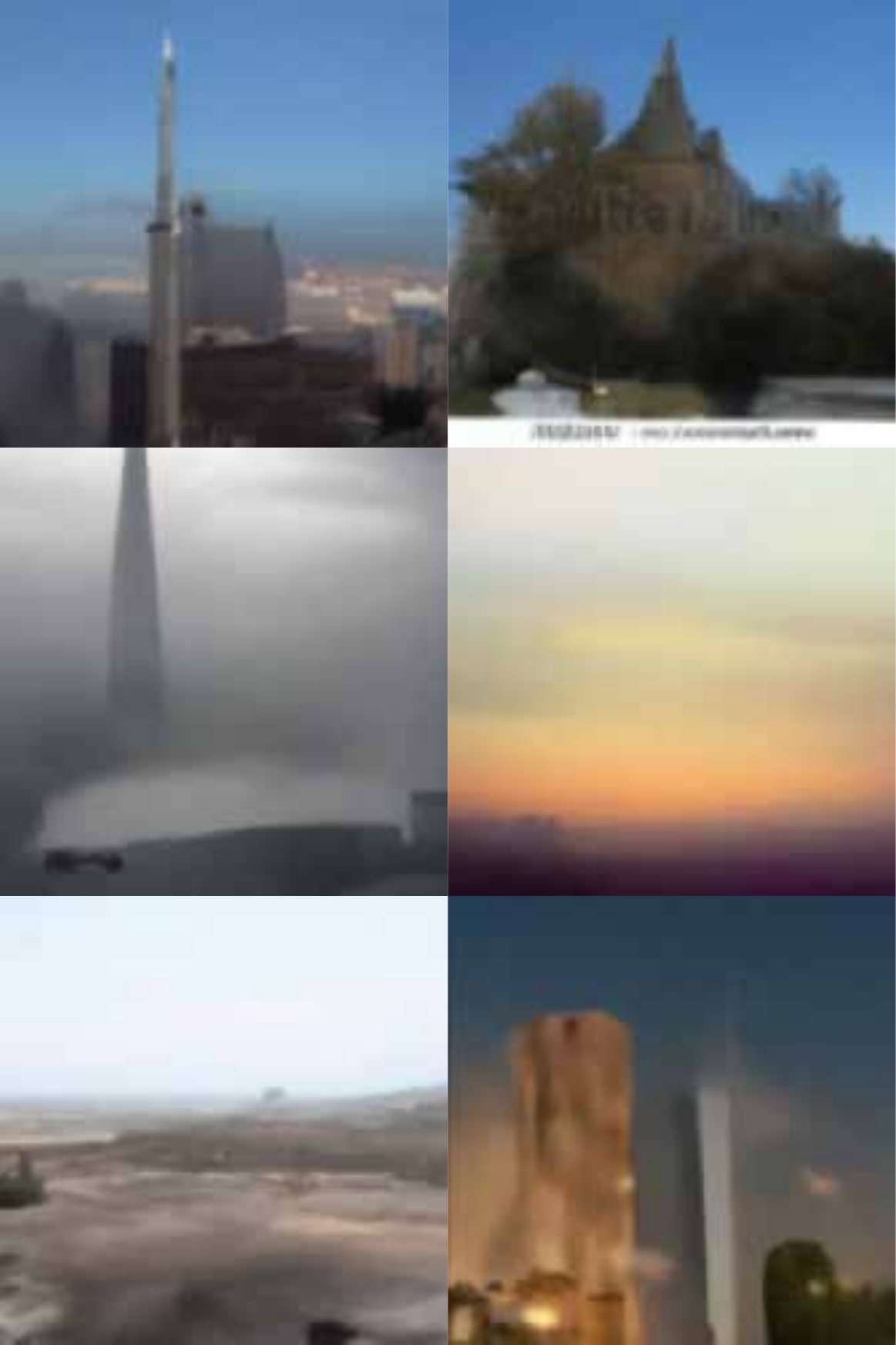}
    \end{minipage}
    \begin{minipage}{0.233\linewidth}
        \centering
        \includegraphics[scale=0.25]{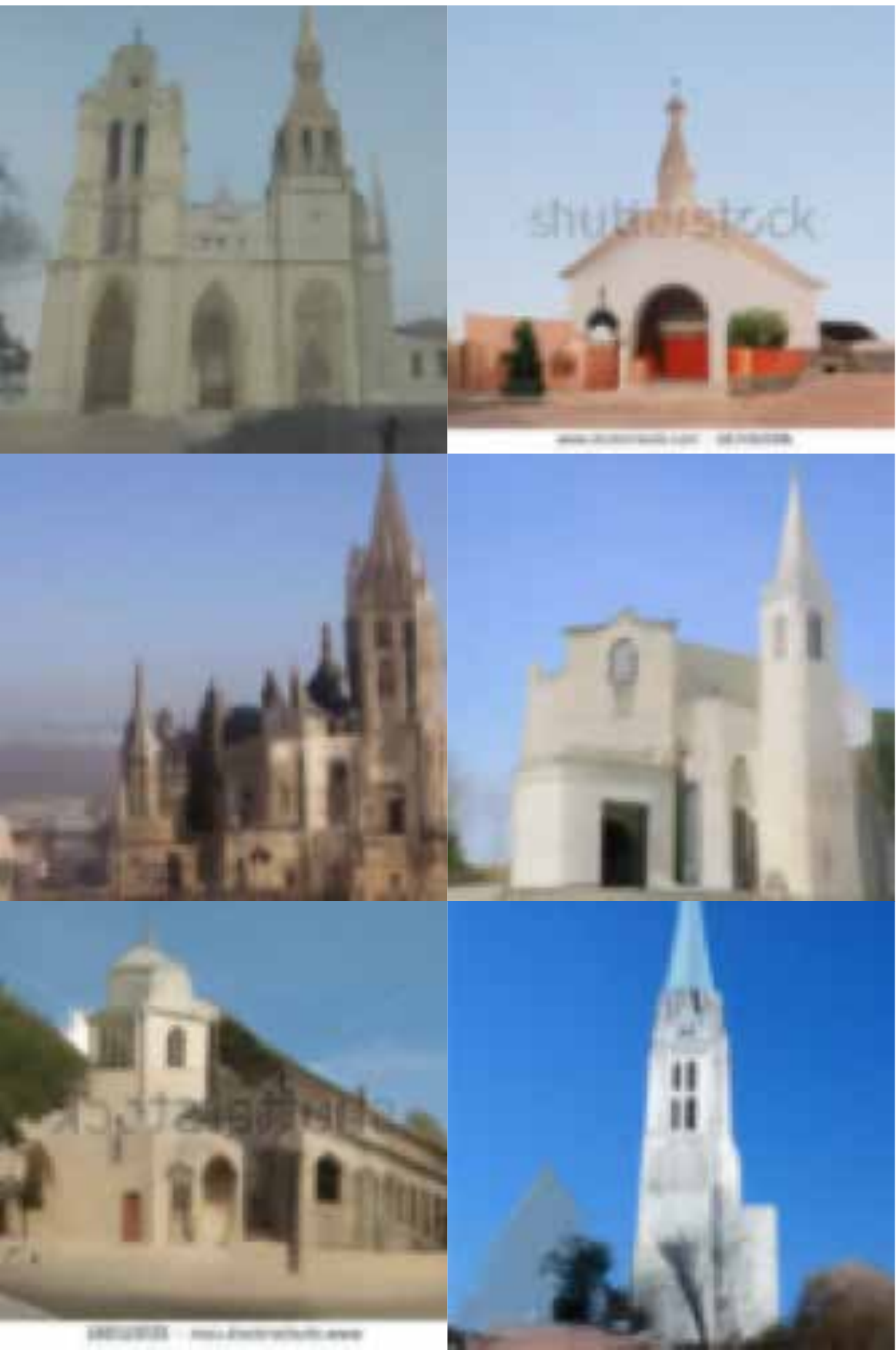}
    \end{minipage}
    \begin{minipage}{0.233\linewidth}
        \centering
        \includegraphics[scale=0.25]{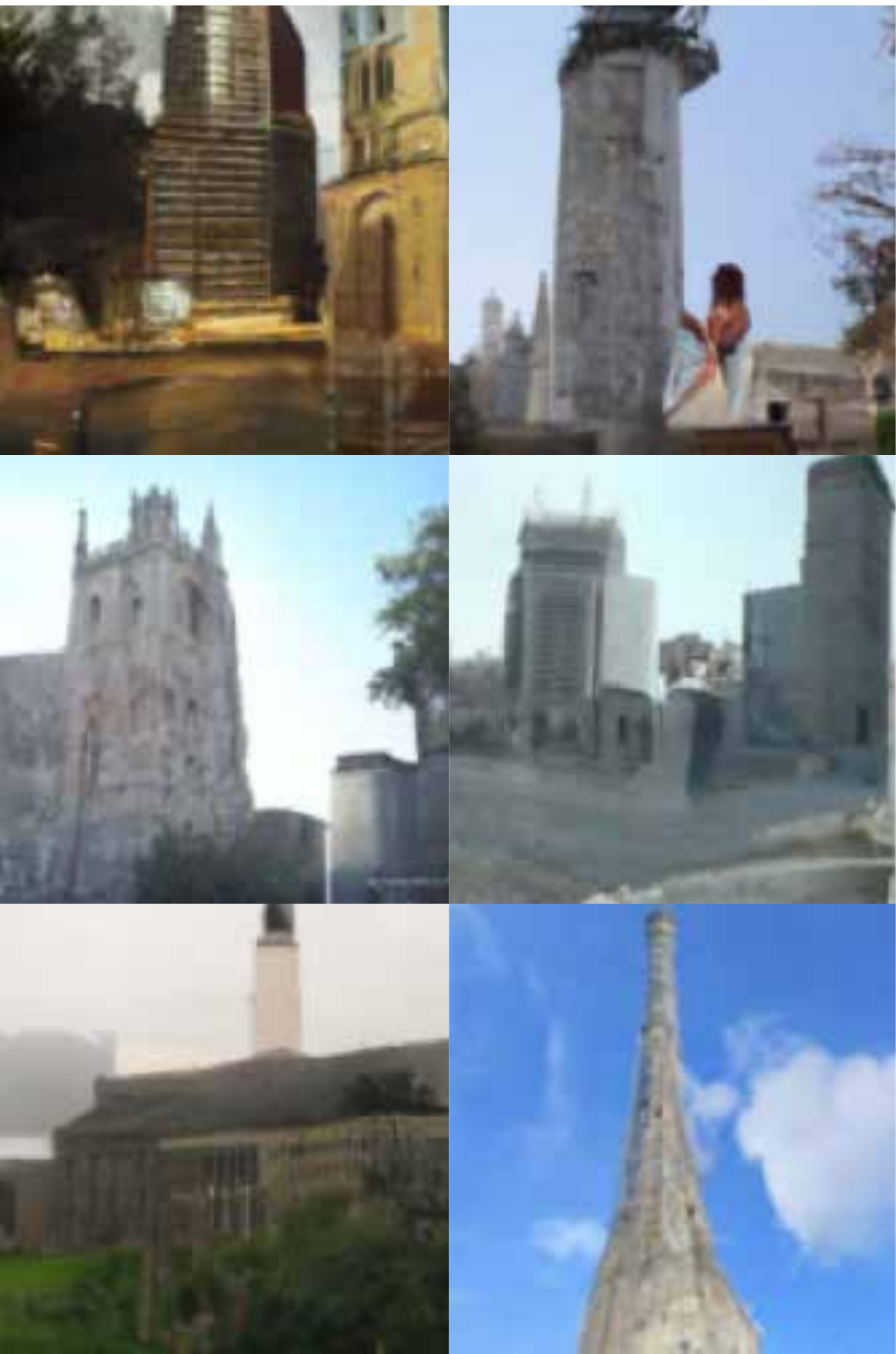}
    \end{minipage}
        \caption{Sampling using NCSNv2~\cite{DBLP:conf/nips/0011E20} on LSUN (church $96 \times 96$ and tower $128 \times 128$).
        \textbf{Left}: The original sampling method with $156$ iterations for church and $108$ iterations for tower. \textbf{Right}: PDS sampling method with $156$ iterations for church and $108$ iterations for tower. More samples in \ref{sec:example_sm}.}
        \label{fig:church_ncsnv2}
\end{figure}
\begin{figure}[h]
    \begin{minipage}{0.233\linewidth}
        \centering
        \includegraphics[scale=0.25]{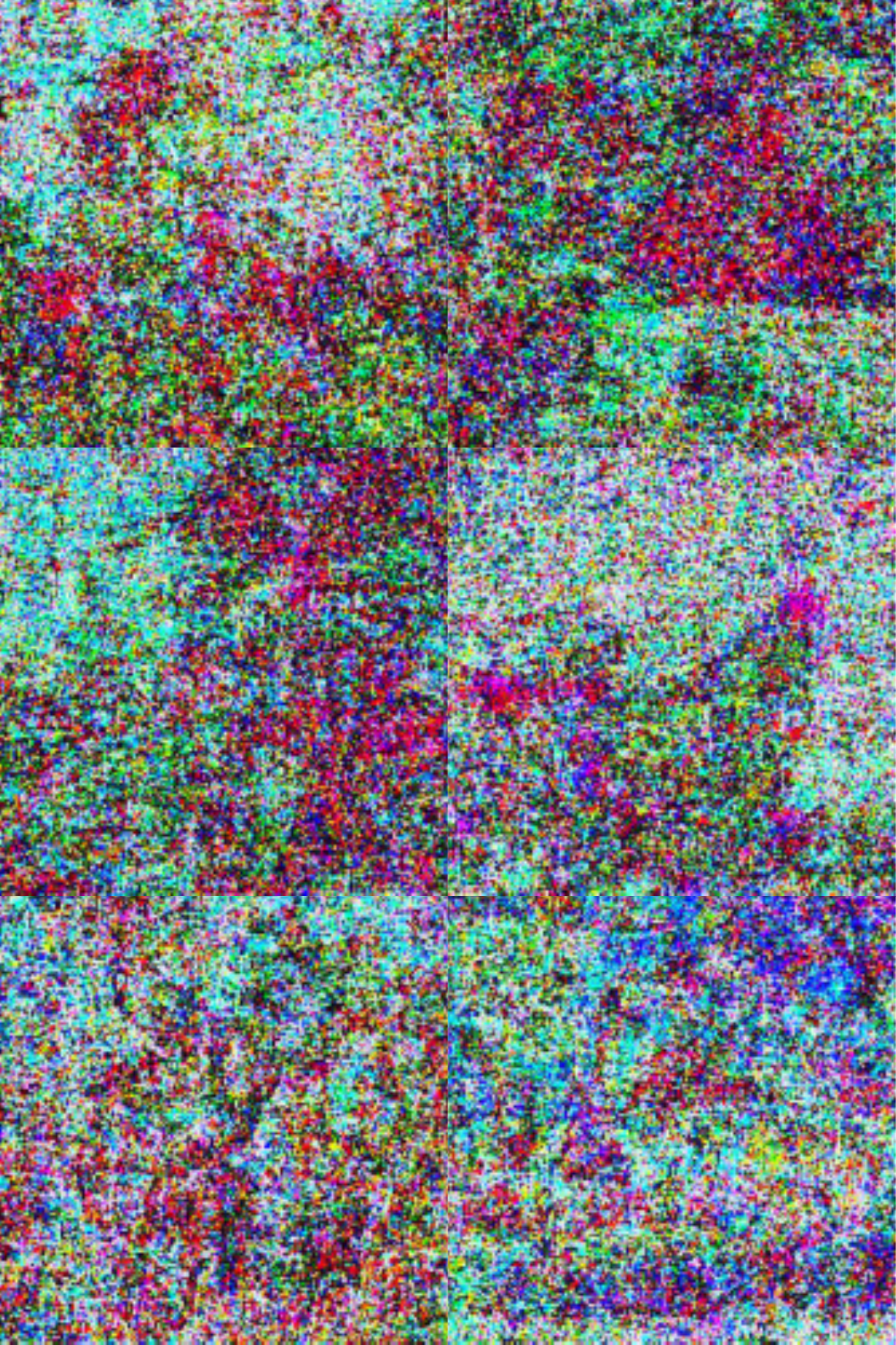}
    \end{minipage}
    \begin{minipage}{0.233\linewidth}
    \centering
    \includegraphics[scale=0.25]{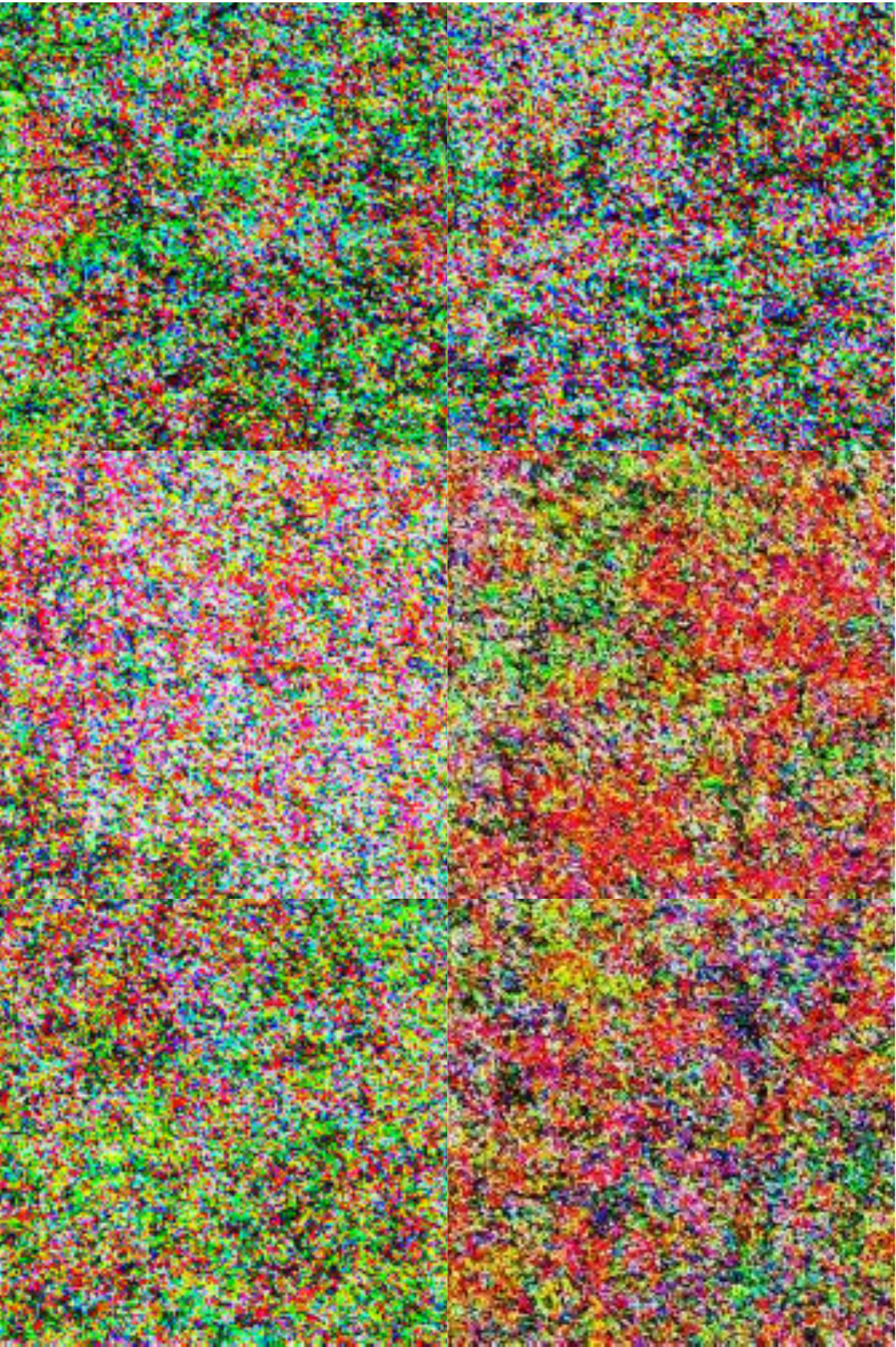}
    \end{minipage}
    \begin{minipage}{0.233\linewidth}
        \centering
        \includegraphics[scale=0.25]{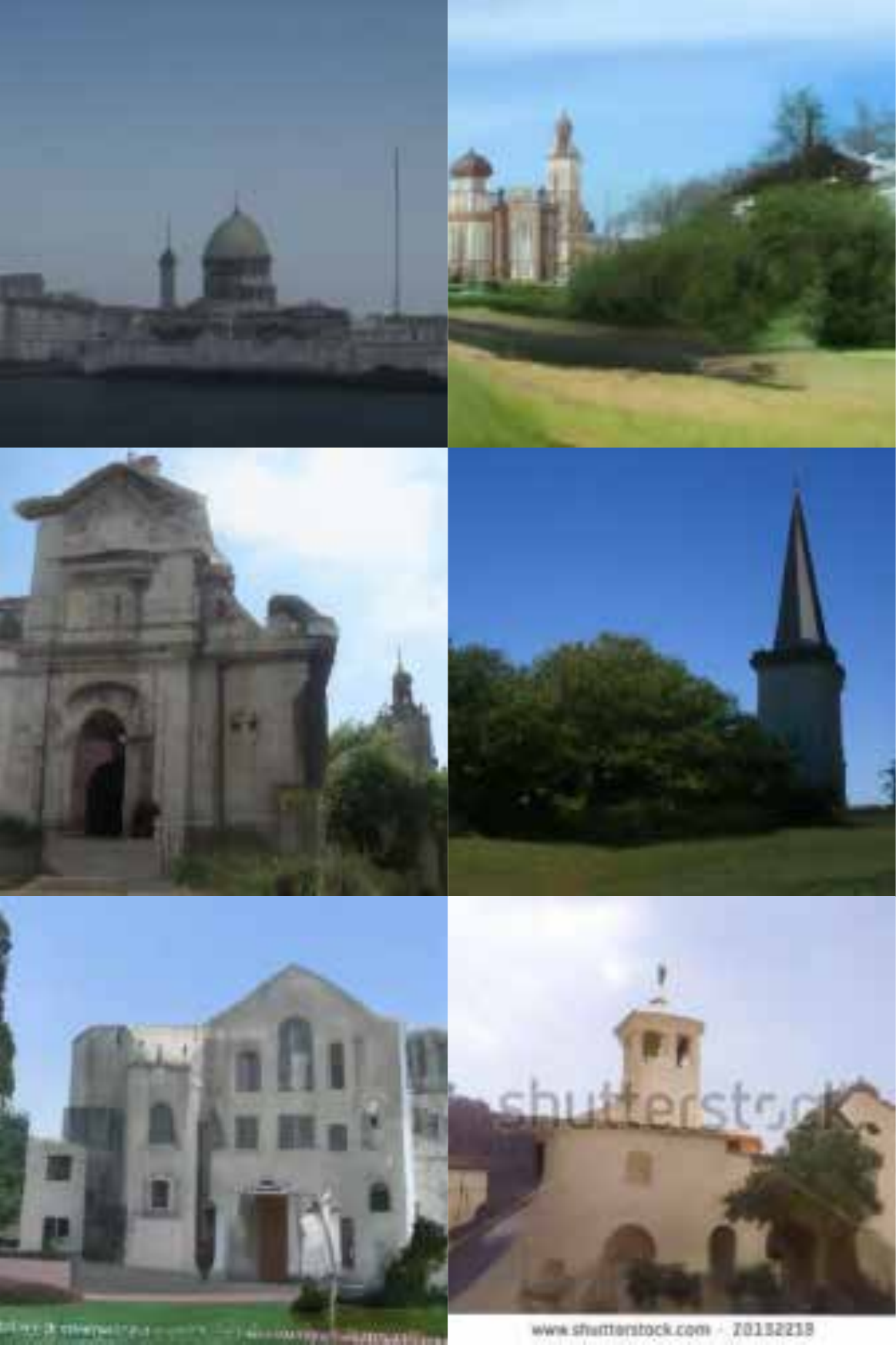}
    \end{minipage}
    \begin{minipage}{0.233\linewidth}
        \centering
        \includegraphics[scale=0.25]{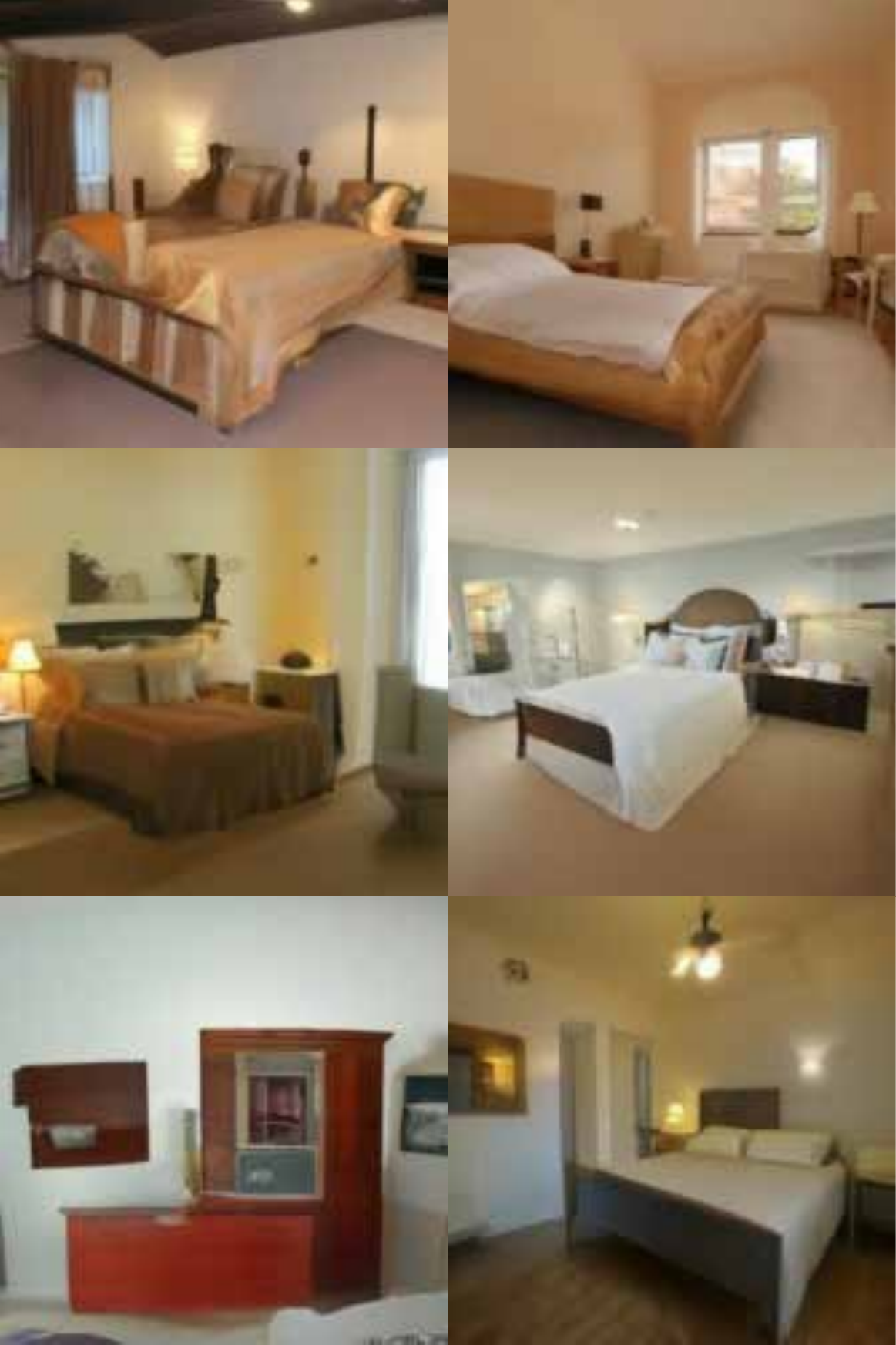}
    \end{minipage}
        \caption{Sampling using NCSN++~\cite{song2020score} on LSUN (church and bedroom) ($256 \times 256$). \textbf{Left}: The original sampling method with $166$ sampling iterations. \textbf{Right}: Our PDS sampling method with $166$ sampling iterations. More examples in \ref{sec:example_sm}.}
        \label{fig: lsun_ncsnpp}
\end{figure}

\paragraph{\bf Evaluation on running speed.}
Apart from the quality evaluation on image synthesis as above, we further compare the running speed between the vanilla and our PDS using NCSN++~\cite{song2020score}.
In this test, we use one NVIDIA RTX 3090 GPU.
We track the average wall-clock time of generating a batch of 8 images.
As shown in Table \ref{tab: time}, our PDS can significantly reduce the running time, particularly for high-resolution image generation on the FFHQ dataset. 

\begin{table}
\begin{center}
\caption{Evaluating the wall-clock time of generating a batch of $8$ images.
{\em SGM:} NCSN++~\cite{song2020score}.
{\em Time unit:} Seconds.}
\label{tab: time}
\setlength{\tabcolsep}{7mm}{
\begin{tabular}{l|ll}
\toprule[1.5pt]
                                  \multicolumn{1}{c}{Dataset} & \multicolumn{1}{c}{LSUN}  & \multicolumn{1}{c}{FFHQ} \\ \midrule
\multicolumn{1}{c}{Vanilla}      & \multicolumn{1}{r}{1173} & \multicolumn{1}{r}{2030}     \\
\multicolumn{1}{c}{\bf PDS}          & \multicolumn{1}{r}{\textbf{90}}  & \multicolumn{1}{r}{\textbf{71}}      \\ \midrule
\multicolumn{1}{c}{\em Speedup times} & \multicolumn{1}{r}{\textbf{13}}    & \multicolumn{1}{r}{\textbf{29}}      \\ \bottomrule[1.5pt]
\end{tabular}}
\end{center}
\vspace{-4ex}
\end{table}

\paragraph{\bf Parameter analysis.}
\label{sec:param_analysis}
We investigate the effect of PDS's two parameters $r$ and $\lambda$ in Eq.~\eqref{eq: freq_mask}. We use NCSN++~\cite{song2020score} with the sampling iterations $T= 166$ on LSUN (bedroom). 
It is observed in Fig.~\ref{fig:para} that there exists a large good-performing range for each parameter. If $\lambda$ is too high or $r$ is too low, PDS will degrade to the vanilla sampling method, yielding corrupted images;
Instead, if $\lambda$ is too low or $r$ is too high, which means over-suppressing high-frequency signals in this case, 
pale images with fewer shape details will be generated.
For NCSN++~\cite{song2020score}, since we directly use the statistics information to construct $R$, there is no need to worry about selecting $r$ and $\lambda$.
\begin{figure}[h]
    \begin{minipage}{1\linewidth}
        \textcolor{white}{--------------}$\lambda = 0.85$\textcolor{white}{---------}$\lambda = 0.9$\textcolor{white}{----------}$\lambda = 0.91$\textcolor{white}{----------}$\lambda = 0.92$\textcolor{white}{----------}$\lambda = 0.95$
    \end{minipage}
    \rotatebox{90}{\textcolor{white}{-----} $r = 0.25H $\textcolor{white}{-------}$r = 0.2H $\textcolor{white}{-------}$r = 0.15H $} 
    \centerline{\includegraphics[scale=0.29]{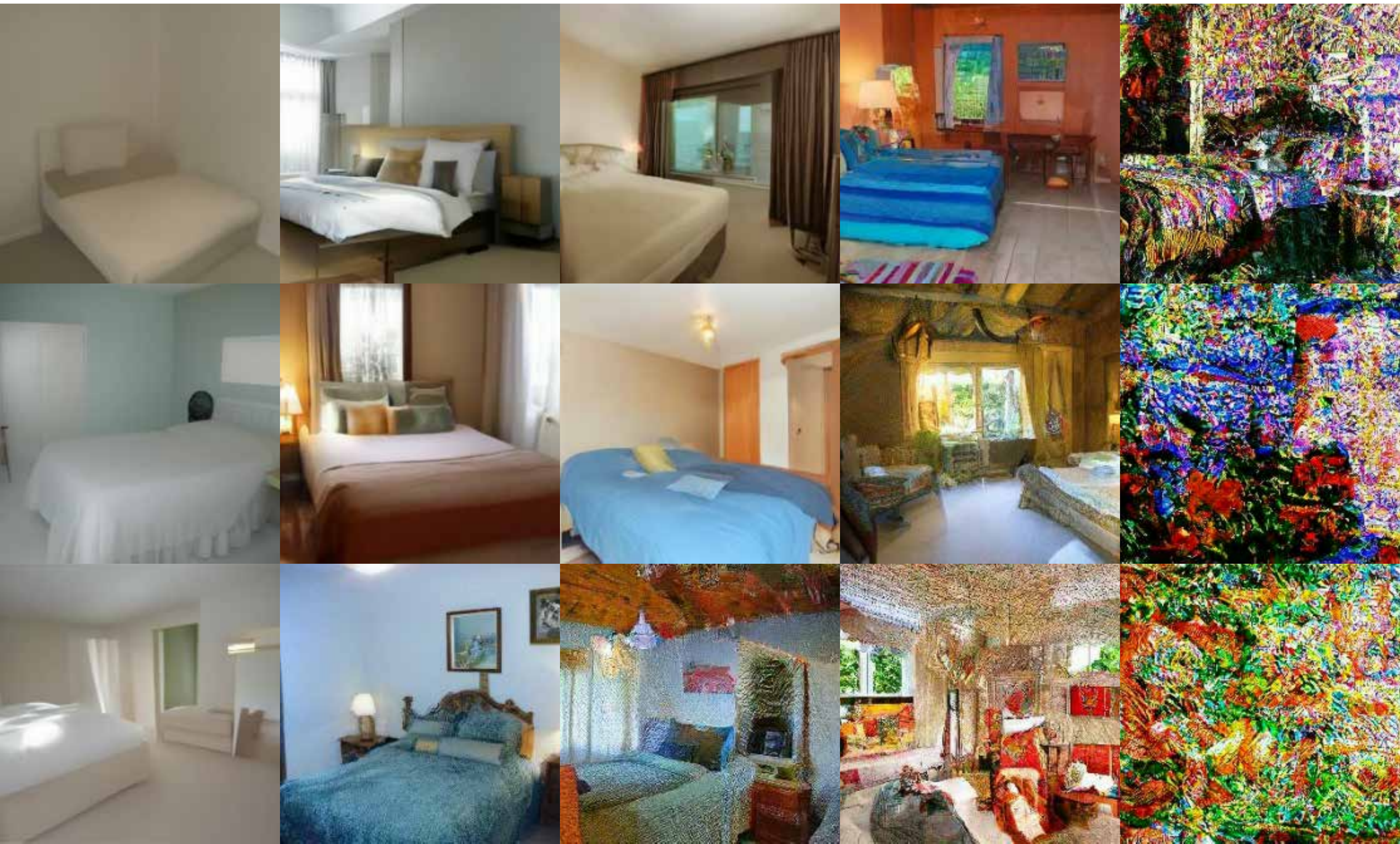}}
    \caption{{\bf Parameter analysis.} Sampling produced by PDS using NCSN++~\cite{song2020score} on LSUN (bedroom) ($256 \times 256$) with $166$ sampling iterations. We set $(r,\lambda)$ to a variety of combination. }
    \label{fig:para}
\end{figure}

\begin{figure}[h]
\centering
    \begin{minipage}{1\linewidth}
        \textcolor{white}{------}\hspace{6ex}$\omega =1 $\hspace{15ex}$\omega = 10$\hspace{15ex}$\omega = 100$\hspace{12ex}$\omega = 1000$
    \end{minipage}
    \begin{minipage}{1\linewidth}
    \centering
    \centerline{\includegraphics[scale=0.195]{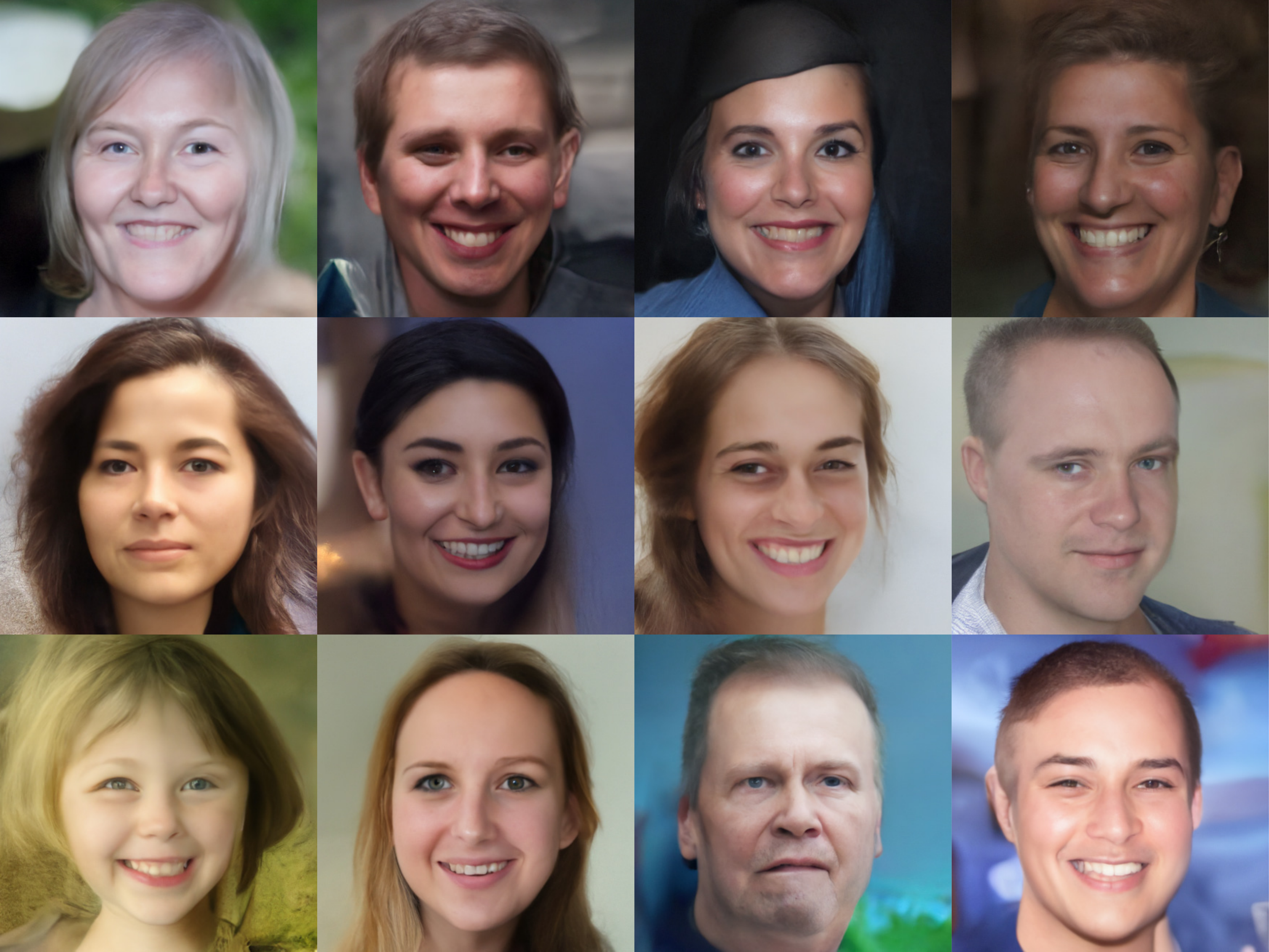}}
    \end{minipage}
        \caption{Samples produced by PDS using NCSN++~\cite{song2020score} on FFHQ ($1024 \times 1024$) with different solenoidal terms (controlled by $\omega$). The sampling iteration is set to $66$. More samples in \ref{sec:solenoidal_sm}. }
        \label{fig: solenoidal}
\end{figure}

\paragraph{\bf Further analysis.}
In this section, we study the effect of the solenoidal term $S\bigtriangledown_{\mathbf{x}} \log p^{\ast}(\mathbf{x})$~\footnote{For NCSN++~\cite{song2020score}, we use $\bigtriangledown_{\mathbf{x}} \log p_t(\mathbf{x})$, where $p_t$ is the distribution function of $\mathbf{x}$ at $t$, since $\bigtriangledown_{\mathbf{x}} \log p^{\ast}(\mathbf{x})$ is inaccessible in NCSN++.} to the diffusion process. As proved in Thm.~\ref{thm: unchange2}, as long as $S$ is skew-symmetric, it will not change the steady-state distribution of the original process. To verify this claim experimentally, we further generalize the original process as
\begin{align}
    d\mathbf{x} = \frac{\epsilon^2}{2}(M^{-1}M^{\mathbf{-T}}+\omega S)\bigtriangledown_{\mathbf{x}}\log p^{\ast}(\mathbf{x})dt + \epsilon M^{-1}d\mathbf{w},
\end{align}
where $\omega$ is the parameter that controls the scale of $S$. In Fig.~\ref{fig: solenoidal}, we set $S[\cdot] = Re[F[\cdot] - F^{\mathsf{T}}[\cdot]]$ which is obviously skew-symmetric. 
We change the scale of $\omega$ from $1$ to $1000$ for evaluating its impact on the output samples. It is observed that $\omega$ does not affect the quality of output images.
This verifies that $S$ does not change the steady-state distribution of the original diffusion process. 
Additionally, we perform similar tests with different iterations and other different skew-symmetric operator $S$.
We still observe no obvious acceleration effect from the solenoidal term (see \ref{sec:solenoidal_sm}).

\section{Limitations}
In general, there are several parameters in the preconditioning matrix of PDS need to be determined. A further study is needed to enable PDS find the best parameter settings automatically.
Although DDPMs are a variant of SGMs, we find PDS can not directly used on DDPMs, since the diffusion process of DDPMs is not a Langevin dynamics. Nevertheless, we find that it is possible to rewrite this diffusion process to imitate the structure of Langevin dynamics, then use PDS for acceleration. 
We leave it for future study.

\section{Conclusion}
In this work, we have proposed a novel preconditioned diffusion sampling (PDS) method for accelerating off-the-shelf score-based generative models (SGMs), without model retraining. 
Considering the diffusion process
as a Metropolis adjusted Langevin algorithm, we reveal that existing sampling suffers from ill-conditioned curvature.
To solve this, we reformulate the diffusion process with matrix preconditioning whilst preserving its steady-state distribution (i.e., the target distribution), leading to our PDS solution.
Experimentally, we show that PDS can significantly accelerate existing state-of-the-art SGMs while maintaining the generation quality.

\section*{Acknowledgments}
This work was supported in part by 
National Natural Science Foundation of China (Grant No. 6210020439),
Lingang Laboratory (Grant No. LG-QS-202202-07),
Natural Science Foundation of Shanghai (Grant No. 22ZR1407500),
Shanghai Municipal Science and Technology Major Project (Grant No. 2018SHZDZX01 and 2021SHZDZX0103),
Science and Technology Innovation 2030 - Brain Science and Brain-Inspired Intelligence Project (Grant No. 2021ZD0200204).


\bibliographystyle{splncs04}
\bibliography{reference}

\appendix

\section{Appendix}
\subsection{Preconditioning a diffusion process in the frequency domain}\label{sec:freq_sm}
In this section, we will prove theoretically why we can {\em directly} regulate the frequency distribution of a diffusion process through the preconditioning strategy, and why it is necessary to do so.

We first show that a diffusion process can be directly transformed to another space (e.g., the frequency domain) via an orthogonal transform. To minimize ambiguity, we denote $p^{\ast}(\mathbf{x})$ as $p^{\ast}_{\mathbf{x}}(\mathbf{x})$.
\begin{theorem}
The Langevin dynamics
\begin{align}\label{eq: Langevin_1}
    d\mathbf{x} = \frac{\epsilon^2}{2}\bigtriangledown_{\mathbf{x}}\log p_{\mathbf{x}}^{\ast}(\mathbf{x})dt + \epsilon d\mathbf{w}
\end{align}
can be rewritten as
\begin{align}\label{eq: Langevin_2}
    d\tilde{\mathbf{x}} = \frac{\epsilon^2}{2}\bigtriangledown_{\tilde{\mathbf{x}}}\log p_{\tilde{\mathbf{x}}}^{\ast}(\tilde{\mathbf{x}})dt + \epsilon d\mathbf{w},
\end{align}
where $\tilde{\mathbf{x}} :s= B\mathbf{x}$, given $B$ is an orthogonal transform. 
\end{theorem}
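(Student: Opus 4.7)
The plan is to perform the linear change of variables $\tilde{\mathbf{x}} = B\mathbf{x}$ directly on the SDE \eqref{eq: Langevin_1} and verify that the resulting equation has exactly the form of \eqref{eq: Langevin_2} with the score of the transformed density. Since $B$ is a constant (deterministic, time-independent) orthogonal matrix, no Itô correction is needed and we simply have $d\tilde{\mathbf{x}} = B\, d\mathbf{x}$, which immediately yields
\begin{equation*}
    d\tilde{\mathbf{x}} = \frac{\epsilon^2}{2} B\,\nabla_{\mathbf{x}}\log p^{\ast}_{\mathbf{x}}(\mathbf{x})\,dt + \epsilon\, B\, d\mathbf{w}.
\end{equation*}

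I would then handle the two terms separately. For the noise term, since $B$ is orthogonal we have $B B^{\mathsf{T}} = I$, so $B\,d\mathbf{w}$ has the same (Gaussian) covariance structure as $d\mathbf{w}$; by Lévy's characterization, $\tilde{\mathbf{w}} := B\mathbf{w}$ is again a standard Wiener process, and so $\epsilon B\,d\mathbf{w}$ is in law just $\epsilon\, d\tilde{\mathbf{w}}$, which is the noise term appearing in \eqref{eq: Langevin_2}. For the drift term, I need to show $B\,\nabla_{\mathbf{x}}\log p^{\ast}_{\mathbf{x}}(\mathbf{x}) = \nabla_{\tilde{\mathbf{x}}}\log p^{\ast}_{\tilde{\mathbf{x}}}(\tilde{\mathbf{x}})$. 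The standard change-of-variables formula for densities, together with $|\det B|=1$, gives
\begin{equation*}
    p^{\ast}_{\tilde{\mathbf{x}}}(\tilde{\mathbf{x}}) \;=\; p^{\ast}_{\mathbf{x}}(B^{-1}\tilde{\mathbf{x}}) \;=\; p^{\ast}_{\mathbf{x}}(B^{\mathsf{T}}\tilde{\mathbf{x}}).
\end{equation*}
Applying the chain rule to $\log p^{\ast}_{\mathbf{x}}(B^{\mathsf{T}}\tilde{\mathbf{x}})$ and using $(B^{\mathsf{T}})^{\mathsf{T}} = B$ produces exactly $\nabla_{\tilde{\mathbf{x}}}\log p^{\ast}_{\tilde{\mathbf{x}}}(\tilde{\mathbf{x}}) = B\,\nabla_{\mathbf{x}}\log p^{\ast}_{\mathbf{x}}(\mathbf{x})\big|_{\mathbf{x}=B^{\mathsf{T}}\tilde{\mathbf{x}}}$, which matches the drift in the transformed SDE.

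Combining the two pieces and relabelling the Wiener process, the transformed SDE becomes \eqref{eq: Langevin_2}, completing the proof. The only genuinely delicate step is the chain rule in the drift calculation: one has to be careful to transpose $B$ at the right place so that the Jacobian of $\mathbf{x}\mapsto B^{\mathsf{T}}\tilde{\mathbf{x}}$ combines correctly with the original gradient; everything else is a direct consequence of orthogonality (preservation of Lebesgue measure and of Brownian motion).
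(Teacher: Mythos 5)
Your proposal is correct and follows essentially the same route as the paper's proof: multiply the SDE by $B$, use the rotational invariance of the Wiener process (which you justify slightly more explicitly via L\'evy's characterization) for the noise term, and combine the density change-of-variables formula with $\left|\det B\right|=1$ and the chain rule to identify the transformed drift with the score $\bigtriangledown_{\tilde{\mathbf{x}}}\log p_{\tilde{\mathbf{x}}}^{\ast}(\tilde{\mathbf{x}})$. Your added remark that no It\^o correction arises because $B$ is a constant linear map is a sensible clarification, but the argument is otherwise the same.
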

\begin{proof}
Multiplying $B$ on both sides of Eq.~\eqref{eq: Langevin_1}, we have:
\begin{align}
       d\tilde{\mathbf{x}} = \frac{\epsilon^2}{2}B\bigtriangledown_{\mathbf{x}}\log p_{\mathbf{x}}^{\ast}(\mathbf{x})dt + \epsilon Bd\mathbf{w}.
\end{align}
We have $Bd\mathbf{w}=d\mathbf{w}$ by the rotational invariance of the standard Wiener process. Now we only need to verify
\begin{align}\label{eq:need}
    B\bigtriangledown_{\mathbf{x}}\log p_{\mathbf{x}}^{\ast}(\mathbf{x}) = \bigtriangledown_{\tilde{\mathbf{x}}}\log p_{\tilde{\mathbf{x}}}^{\ast}(\tilde{\mathbf{x}}).
\end{align}
Given two $d$-dimensional random vectors $\mathbf{x},\mathbf{y}\in\mathbb{R}^d$ with their respective differentiable density functions $ p_{\mathbf{x}}$ and $ p_{\mathbf{y}}$, if $g(\mathbf{x}) = \mathbf{y}$, where $g\in \mathbb{R}^d \rightarrow \mathbb{R}^d$ is an invertible differentiable transformation, we have
\begin{align}
    p_{\mathbf{y}}(\mathbf{y}) = p_{\mathbf{x}}(g^{-1}(\mathbf{y}))\left |\det\left[ \frac{d g^{-1}(\mathbf{y})}{d\mathbf{y}}\right] \right |.
\end{align}
Therefore, 
\begin{align}\label{eq:trans}
    \bigtriangledown_{\mathbf{x}}\log p_{\mathbf{x}}^{\ast}(\mathbf{x}) =
    \bigtriangledown_{\mathbf{x}}[\log p_{\tilde{\mathbf{x}}}^{\ast}(\tilde{\mathbf{x}}) - \log\left |\det\left[ B^T\right] \right |] = \bigtriangledown_{\mathbf{x}}\log p_{\tilde{\mathbf{x}}}^{\ast}(\tilde{\mathbf{x}}).
\end{align}
Using the chain rule of the calculus, we have
\begin{align}\label{eq:chain}
\bigtriangledown_{\mathbf{x}}\log p_{\tilde{\mathbf{x}}}^{\ast}(\tilde{\mathbf{x}})
   =B^T\bigtriangledown_{\tilde{\mathbf{x}}}\log p_{\tilde{\mathbf{x}}}^{\ast}(\tilde{\mathbf{x}}).
\end{align}
Combining Eq.~\eqref{eq:trans} and Eq.~\eqref{eq:chain}, we have
\begin{align}
    \bigtriangledown_{\mathbf{x}}\log p_{\mathbf{x}}^{\ast}(\mathbf{x})=B^T\bigtriangledown_{\tilde{\mathbf{x}}}\log p_{\tilde{\mathbf{x}}}^{\ast}(\tilde{\mathbf{x}}),
\end{align}
which is equivalent to Eq.~\eqref{eq:need} using the orthogonality of $B$.
\end{proof}
\begin{remark}
The above result is easy to be extended to a more general case where the drift term $\frac{\epsilon^2}{2}\bigtriangledown_{\mathbf{x}}\log p_{\mathbf{x}}^{\ast}(\mathbf{x})$ is replaced by $f(t)\mathbf{x}+\bigtriangledown_{\mathbf{x}}\log q(\mathbf{x},t)$, if $f$ is a scalar function of time and $q(\cdot,t)$ is a distribution function that may vary over time. Therefore, the theorem can be applied generally to all the diffusion processes adopted in NCSN~\cite{song2019generative}, NCSNv2~\cite{DBLP:conf/nips/0011E20}, and NCSN++~\cite{song2020score}.  
\end{remark}
Specially, when we set $B$ as a two-dimensional discrete cosine transform~\cite{ahmed1974discrete,BOVIK200997}, the whole diffusion process can be transformed to the frequency domain without changing its original form. {\em This explains why we can directly implement a preconditioning operator on the original diffusion process to regulate its frequency distribution.}

There exists a general observation that the amplitude of the high-frequency part of a natural image is dramatically lower than that in the low-frequency part~\cite{BOVIK200997}. This means the distribution of natural images exhibits huge gaps in quantity between different coordinates in the frequency domain, causing a severe ill-conditioned issue. {\em This explains the necessity to regulate the frequency distribution of a diffusion process, which is implemented by preconditioning in this paper.}
\subsection{Parameter settings}\label{sec:para_sm}
We provide the parameter settings used in our experiments in Table.~\ref{tab:para_sm2}. For NCSN~\cite{song2019generative} and NCSNv2~\cite{DBLP:conf/nips/0011E20}, we construct the frequency filter $R$ following Eq.~\eqref{eq: freq_mask}. The two parameters $r$ and $\lambda$ used in each dataset is shown in Table.~\ref{tab:para_sm}. For these two models, we do not apply the space preconditioning.

For NCSN++~\cite{song2020score}, we we construct the frequency filter $R$ following Eq.~\eqref{eq:stats1} and q.~\eqref{eq:stats2}. We apply the space preconditioning following Eq.~\eqref{eq:space_filter} and Eq.~\ref{eq:space_filter_norm} for FFHQ dataset, since there is a clear space structure priors (the layout of human faces), and we do not apply the space preconditioning for other datasets.

\begin{table}[h]
\caption{Parameters of PDS used for constructing frequency filter on NCSN~\cite{song2019generative} and NCSNv2~\cite{DBLP:conf/nips/0011E20} following Eq.~\eqref{eq: freq_mask}. }\label{tab:para_sm2}
\vspace{-4ex}
\begin{center}
    \begin{tabular}{ccccccc}
\toprule[1.5pt]
Dataset                           & Resolution                           &  Model                      & Iterations & $r$       & $\lambda$  & use space preconditioning? \\ \midrule
MNIST                             & $28\times 28$                        &  NCSN                       & $20$    & $0.2H$    &  1.6  &\XSolidBrush \\ \midrule
\multirow{3}{*}{LSUN (church) }   & \multirow{3}{*}{$96\times 96$}       &  \multirow{3}{*}{NCSNv2}    & $126$   & $0.2H$    &  1.6  &\XSolidBrush \\ 
                                  &                                      &                             & $157$   & $0.2H$    &  1.6  &\XSolidBrush \\ 
                                  &                                      &                             & $210$   & $0.2H$    &  1.6  &\XSolidBrush \\ \midrule
\multirow{3}{*}{LSUN (tower) }    & \multirow{3}{*}{$128\times 128$}     &  \multirow{3}{*}{NCSNv2}    & $65$    & $0.2H$    &  1.1  &\XSolidBrush \\
                                  &                                      &                             & $81$    & $0.2H$    &  1.1  &\XSolidBrush \\
                                  &                                      &                             & $108$   & $0.2H$    &  1.1  &\XSolidBrush   \\ \bottomrule[1.5pt]
\end{tabular}
\end{center}
\end{table}

\begin{table}[h]
\caption{Parameters of PDS used for constructing frequency filter on NCSN++~\cite{song2020score} following Eq.~\eqref{eq:stats1} and Eq.~\eqref{eq:stats2}. }\label{tab:para_sm}
\vspace{-4ex}
\begin{center}
    \begin{tabular}{ccccc}
\toprule[1.5pt]
Dataset                   & Resolution                    & Iterations & $\alpha$   & use space preconditioning?  \\ \midrule
\multirow{2}{*}{CIFAR-10} & \multirow{2}{*}{$32\times32$} & 100        & 5        &  \XSolidBrush    \\
                          &                               & 200        & 10       &  \XSolidBrush    \\           \midrule
LSUN (bedroom)            & $256\times256$                & 166        & 5        &  \XSolidBrush    \\
LSUN (church)             & $256\times256$                & 166        & 5        &  \XSolidBrush    \\
FFHQ                      & $1024\times1024$              & 66         & 5        &  \checkmark       \\  \bottomrule[1.5pt]
\end{tabular}
\end{center}

\end{table}

\begin{table}[h]
\begin{center}
\caption{Quantitative evaluation with the Clean-FID (Fr\'echet Inception Distance) metric~\cite{parmaraliased} for accelerated diffusion process. We generate $50k$ images for each method.}\label{tab:fid}
\setlength{\tabcolsep}{2.3mm}{
\begin{tabular}{c|cc|ccc}
\toprule[1.5pt]
Model  & \multicolumn{2}{c|}{NCSNv2~\cite{DBLP:conf/nips/0011E20}}& \multicolumn{3}{c}{NCSN++~\cite{song2020score}} \\ \midrule
Dataset  & \multicolumn{1}{c}{LSUN} & \multicolumn{1}{c|}{LSUN} & \multicolumn{1}{c}{LSUN} & \multicolumn{1}{c}{LSUN} & \multicolumn{1}{c}{FFHQ} \\ \midrule
Class & \multicolumn{1}{c}{Church} & \multicolumn{1}{c|}{Tower} & \multicolumn{1}{c}{Bedroom} & \multicolumn{1}{c}{Church} & \multicolumn{1}{c}{Face} \\ \midrule
Resolution &
\multicolumn{1}{c}{$96\times96$} & \multicolumn{1}{c|}{$128\times128$} & \multicolumn{1}{c}{$256\times256$} & \multicolumn{1}{c}{$256\times256$} & \multicolumn{1}{c}{$1024\times1024$}
\\ \midrule
\multicolumn{1}{l|}{Iterations} & 156                                                                          & 108                                                                          & 166                                                                           & 166                                                                          & 66                                                                           \\
\midrule
Vanilla      & 217.9                                                                       & 67.2                                                                        & 393.7                                                                         & 393.3                                                                        & 463.2                                                                       \\
\textbf{PDS} & \textbf{65.7}                                                                & \textbf{43.8}                                                               & \textbf{16.9}                                                                 & \textbf{15.0}                                                                & \textbf{61.2}                                                                     \\ \bottomrule[1.5pt]
\end{tabular}}
\end{center}
\end{table}

\subsection{More quantitative results}
In this section, we report more quantitative results using Clean-FID (Fr\'echet Inception Distance) metric~\cite{parmaraliased} to verify that our PDS accelerates the vanilla diffusion process while generating images with high quality. 
It is observed in Table~\ref{tab:fid} that the Clean-FID scores of our PDS are all dramatically smaller than those by the original methods in all the cases, consistent with our visualization results (Fig.~\ref{fig: ncsnpp_lsun}-\ref{fig:ffhq_hq}).

\subsection{Solenoidal term analysis}\label{sec:solenoidal_sm}

\begin{figure}[H]
    \vspace{-12ex}
    \centering
    \begin{minipage}{1\linewidth}
        \textcolor{white}{------}\hspace{3ex}$S_1$\hspace{12.5ex}$S_2$\hspace{12.5ex}$S_3$\hspace{12.5ex}$S_4$\hspace{12.5ex}$S_5$\hspace{12.5ex}$S_6$
    \vspace{-24ex}
    \end{minipage}
    \begin{minipage}{1\linewidth}
        \centerline{\includegraphics[scale=0.21]{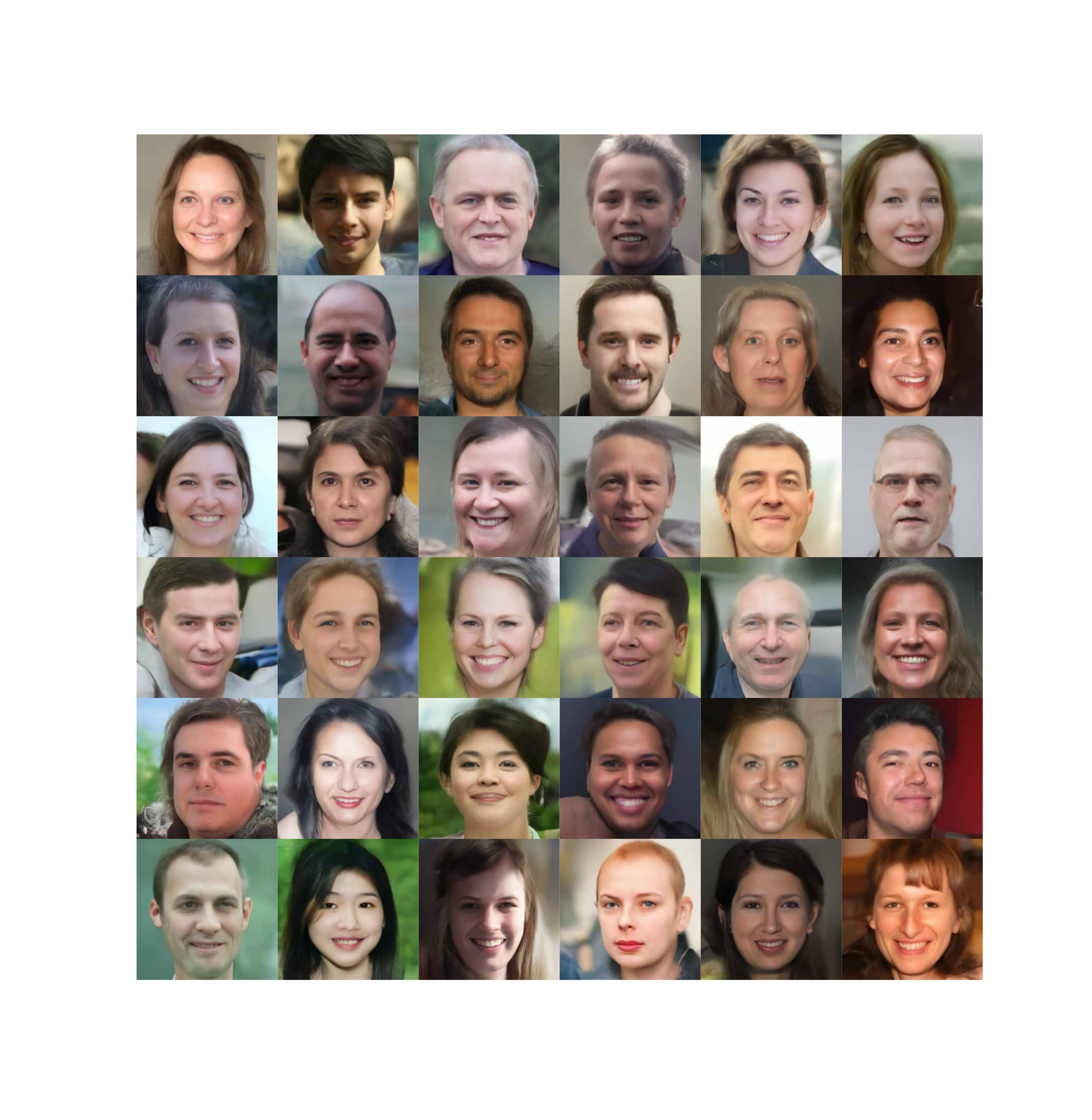}}
    \end{minipage}

    \vspace{-12ex}
        \caption{
        Facial images at a resolution of $1024\times 1024$ generated by NCSN++~\cite{song2020score}
        with our PDS using different solenoidal items.
        Sampling iterations: $66$.
        Dataset: FFHQ~\cite{karras2019style}.
    		}
        \label{fig:ffhq_s_ms}
        \vspace{-4ex}
\end{figure}

In this section, we investigate the effect of the solenoidal term $S\log p^{\ast}(\textbf{x})$ to the diffusion process
\begin{equation}
    d\mathbf{x} = \frac{\epsilon^2}{2}(M^{-1}M^{\mathbf{-T}}+\omega S)\bigtriangledown_{\mathbf{x}}\log p^{\ast}(\mathbf{x})dt + \epsilon M^{-1}d\mathbf{w}
\end{equation}
In Sec. 5 of the main paper, we have shown that using $S = Re[F-F^{\mathsf{T}}]$ has no obvious effect on the diffusion process. Now we study more cases. Denote $P_{m,n}$ as the shift operator that rolls the input image for $m$ places along the height coordinate and rolls the input image for $n$ places along the width coordinate. We then test how the skew-symmetric operators in Eq.~\eqref{eq: so_term} would affect the diffusion process.

\begin{align}\label{eq: so_term}
    &S_1= P_{1,1} - P_{1,1}^{\mathbf{T}}\\ \nonumber
    &S_2=P_{10,10} - P_{10,10}^{\mathbf{T}}\\\nonumber
    &S_3=P_{100,100} - P_{100,100}^{\mathbf{T}}\\\nonumber
    &S_4=Re[F[P_{1,1}- P_{1,1}^{\mathbf{T}}]F^{-1}]\\\nonumber
    &S_5=Re[F[P_{10,10}- P_{10,10}^{\mathbf{T}}]F^{-1}]\\\nonumber
    &S_6=Re[F[P_{100,100}- P_{100,100}^{\mathbf{T}}]F^{-1}].
\end{align}
The sampling results are shown in Fig.~\ref{fig:ffhq_s_ms}, where we set $\omega = 1000$. It is observed that again all these solenoidal terms do not impose an obvious effect on the sampling quality. Additionally, as displayed in Fig.~\ref{fig:ffhq_s_ms_diff}, these solenoidal terms also do not make an obvious effect on acceleration. Nevertheless, we only study the effect of some special cases of the solenoidal terms, which does not mean there are no solenoidal terms that can accelerate the diffusion process, and the search for these solenoidal terms is in a further study.

\begin{figure}[h]
    \vspace{-25ex}
    \centering
    \begin{minipage}{1\linewidth}
         \textcolor{white}{------}\hspace{10.5ex}$T = 66$\hspace{10.5ex}$T = 57$\hspace{10.5ex}$T = 50$\hspace{10.5ex}$T = 44$
    \vspace{-38ex}
    \end{minipage}
    \begin{minipage}{1\linewidth}\hspace{1ex}
          \rotatebox{90}{\textcolor{white}{------}\hspace{22ex}$S=0$\hspace{15ex}$S_1$\hspace{15ex}$S_2$\hspace{15ex}$S_3$\hspace{15ex}$S_4$\hspace{15ex}$S_5$\hspace{15ex}$S_6$} \hspace{-4.5ex}
        \centerline{\includegraphics[scale=0.245]{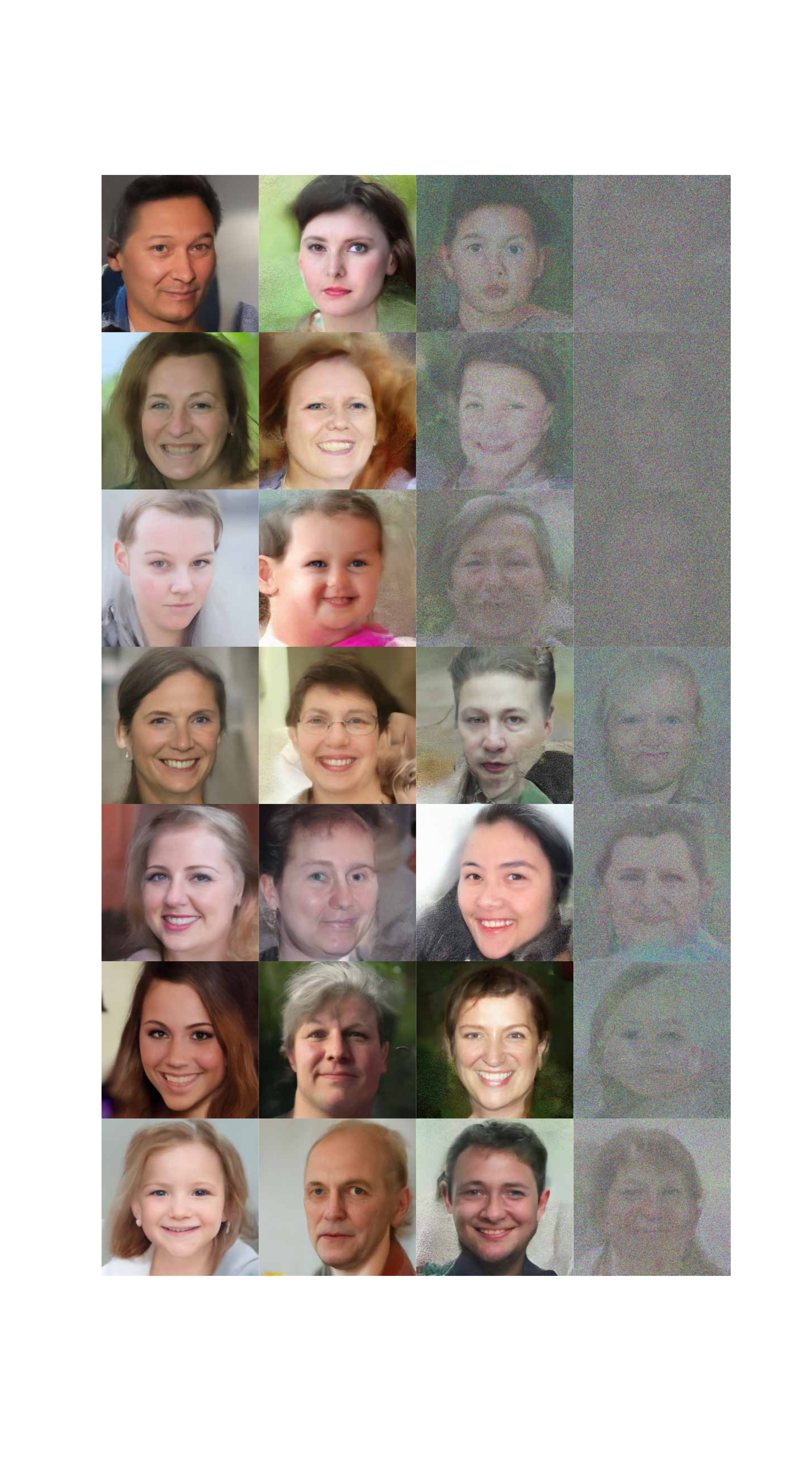}}
    \end{minipage}

    \vspace{-20ex}
        \caption{
        Facial images at a resolution of $1024\times 1024$ generated by NCSN++~\cite{song2020score}
        with our PDS
        under different sampling iterations and different solenoidal items described in Eq.~\eqref{eq: so_term}. We set $R$ following Eq.~\eqref{eq:stats2} and do not apply space preconditioning
        Dataset: FFHQ~\cite{karras2019style}.
        }
        \label{fig:ffhq_s_ms_diff}
        \vspace{-4ex}
\end{figure}
\clearpage
\subsection{More examples}\label{sec:example_sm}

\begin{figure}[ht]
    \vspace{-4ex}
    \centerline{\includegraphics[scale=0.33]{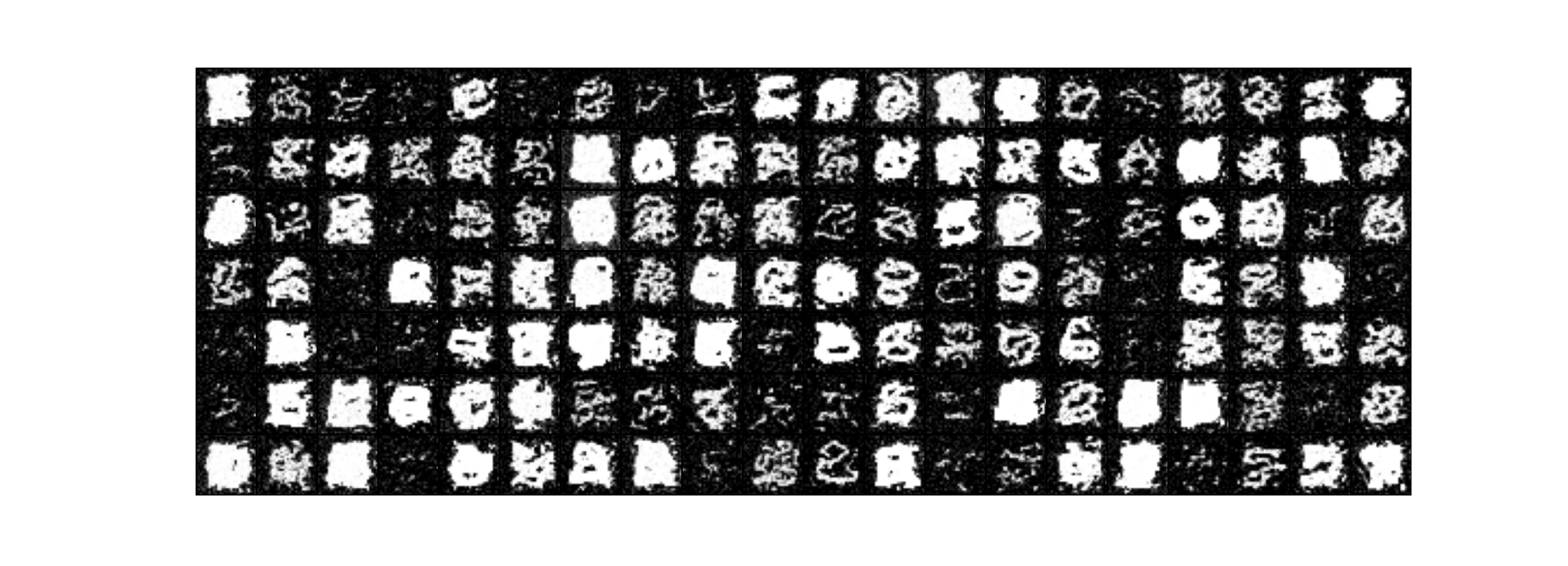}}
         \vspace{-9.8ex}
        \centerline{\includegraphics[scale=0.33]{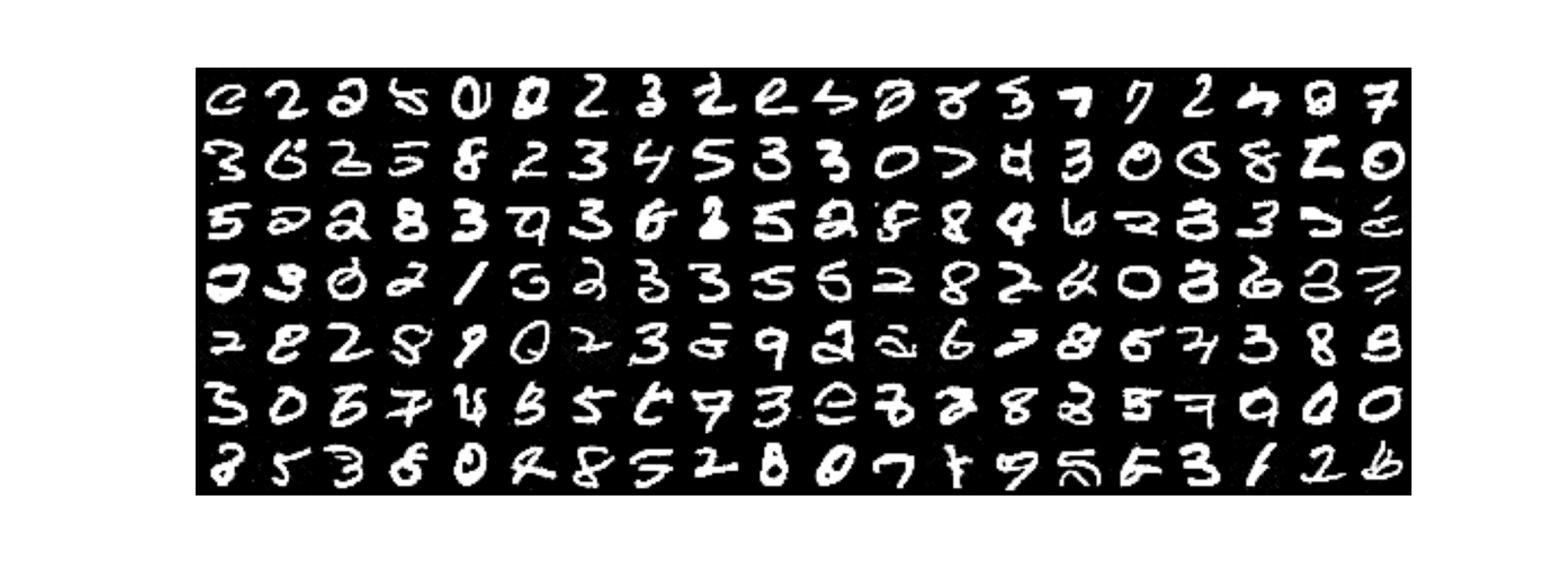}}
    \vspace{-3ex}
        \caption{Sampling using NCSN~\cite{song2019generative} on MNIST ($28 \times 28$). \textbf{Top}: Results by the original sampling method with $20$ sampling iterations. \textbf{Bottom}: Results by our PDS with $20$ sampling iterations, where we set $(r,\lambda)=(0.2H,1.6)$. 
        }      
        \label{fig: mnist_sm}
        \vspace{-6ex}
\end{figure}
\begin{figure}[ht]
    \vspace{-17ex}
    \begin{minipage}{1\linewidth}
        \textcolor{white}{------}\hspace{1ex}$T = 210$\hspace{6ex}$T = 157$\hspace{6ex}$T = 126$\hspace{7ex}$T = 108$\hspace{6ex}$T = 81$\hspace{6ex}$T = 65$
    \vspace{-26ex}
    \end{minipage}
    
    \rotatebox{90}{\textcolor{white}{------}\hspace{-34ex} Ours \hspace{46ex} Baseline~\cite{DBLP:conf/nips/0011E20}} \hspace{-0.5ex}
    \begin{minipage}{0.47\linewidth}
        \centerline{\includegraphics[scale=0.24]{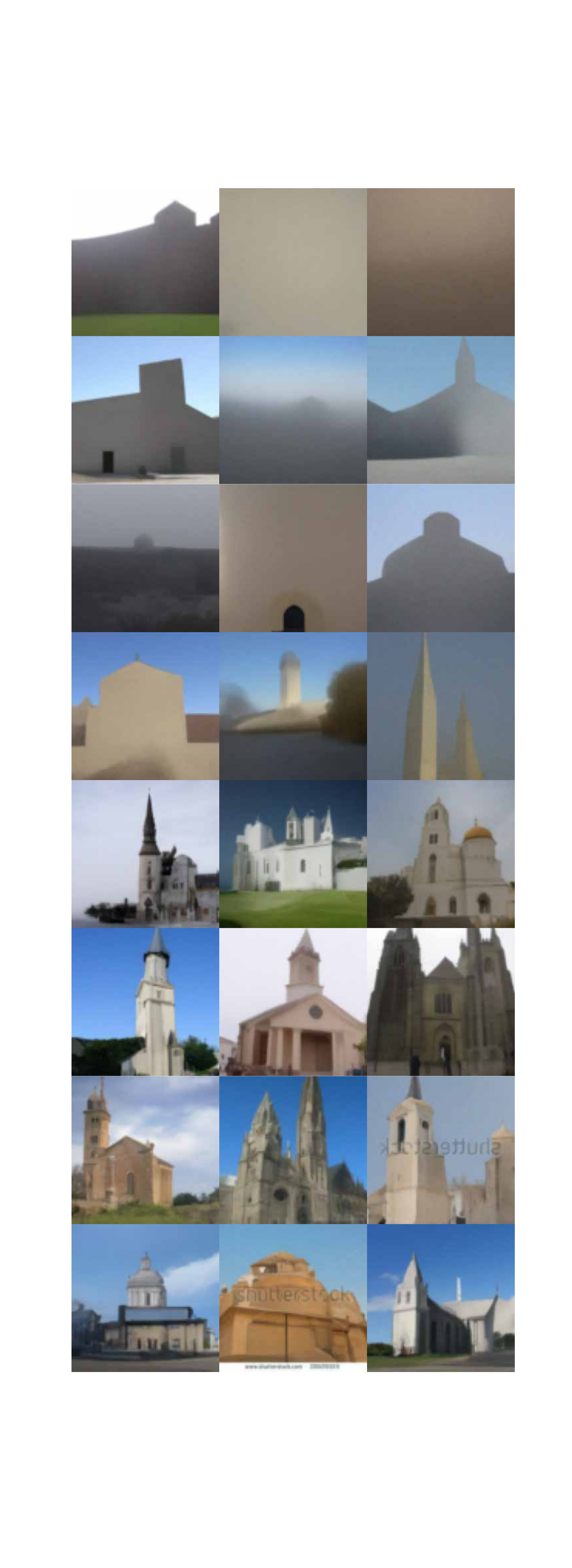}}
    \end{minipage}
    \begin{minipage}{0.47\linewidth}
            \centerline{\includegraphics[scale=0.24]{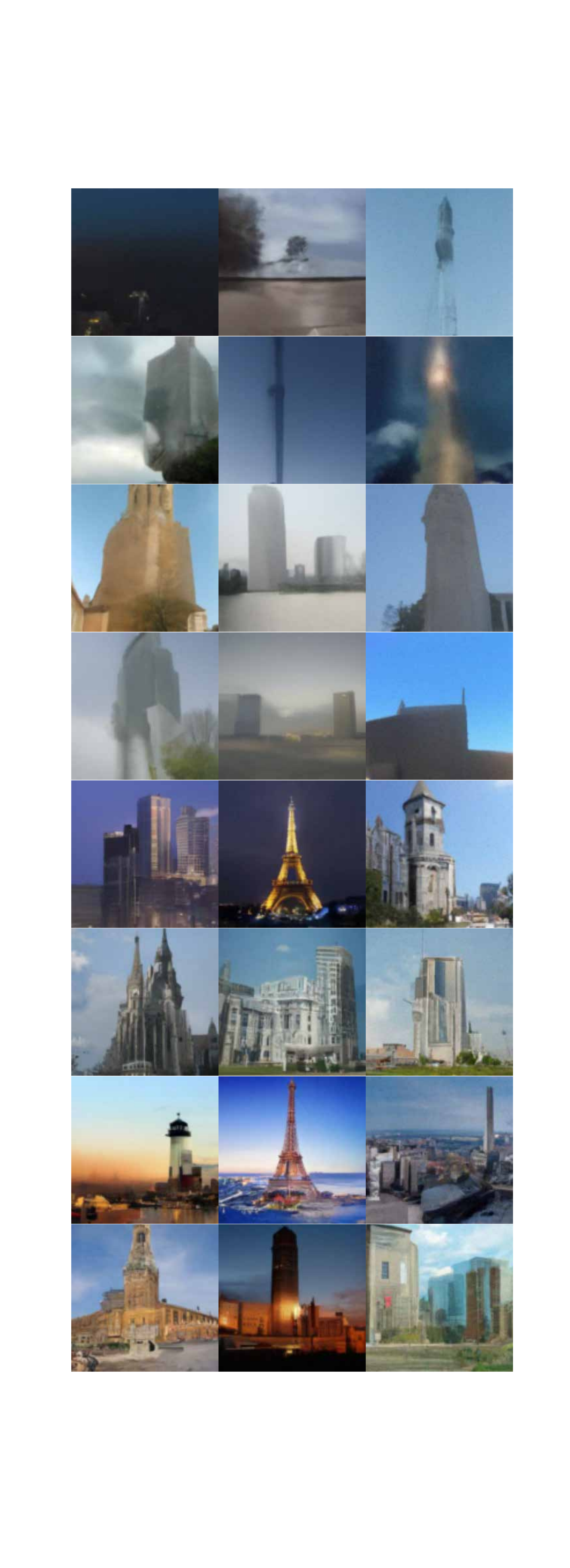}}
    \end{minipage}
    \vspace{-16ex}
           \caption{Sampling using NCSNv2~\cite{DBLP:conf/nips/0011E20} on LSUN (church $96\times 96$ and tower $128 \times 128$) under different iteration numbers.}
        \label{fig: ncsnv2_lsun}
        \vspace{-6ex}
\end{figure}
\begin{figure}[ht]
    \vspace{-17ex}
    \begin{minipage}{1\linewidth}
        \textcolor{white}{------}\hspace{1ex}$T = 200$\hspace{5.6ex}$T = 166$\hspace{5.6ex}$T = 142$\hspace{6.6ex}$T = 200$\hspace{5.6ex}$T = 166$\hspace{5.6ex}$T = 142$
    \vspace{-26ex}
    \end{minipage}
    
    \rotatebox{90}{\textcolor{white}{------}\hspace{-34ex} Ours \hspace{46ex} Baseline~\cite{song2020score}} \hspace{-0.5ex}
    \begin{minipage}{0.47\linewidth}
        \centerline{\includegraphics[scale=0.24]{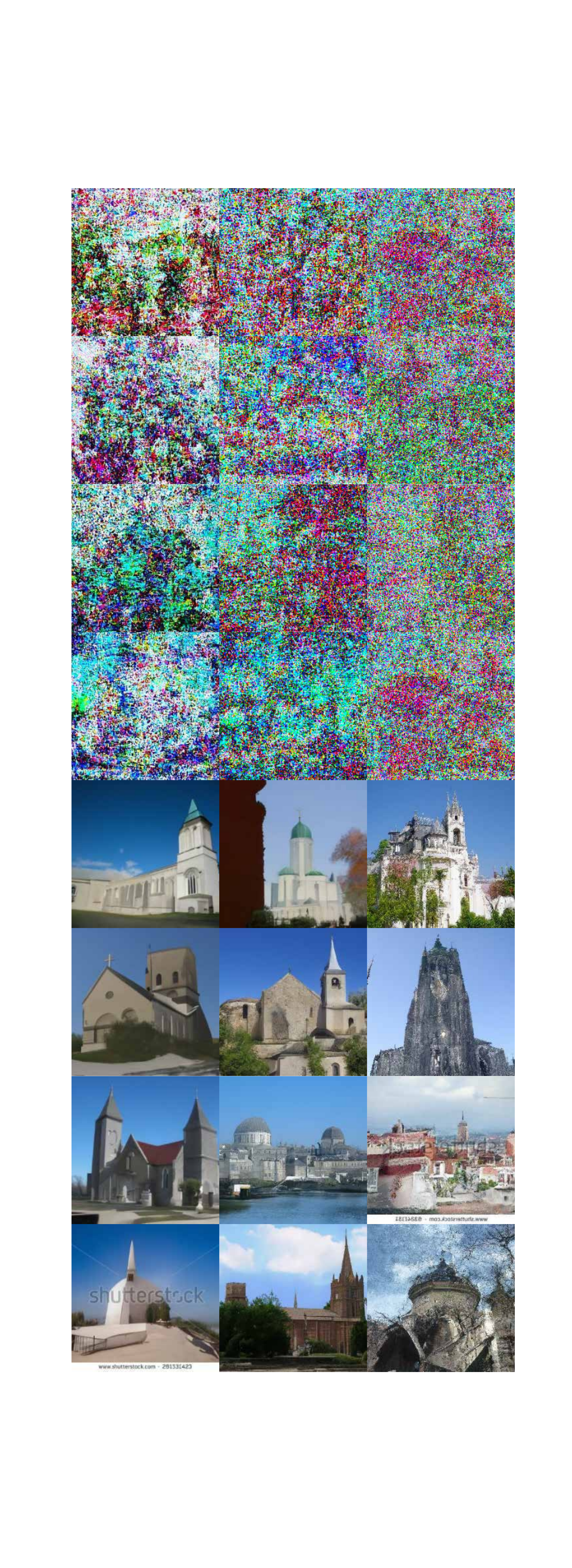}}
    \end{minipage}
    \begin{minipage}{0.47\linewidth}
            \centerline{\includegraphics[scale=0.24]{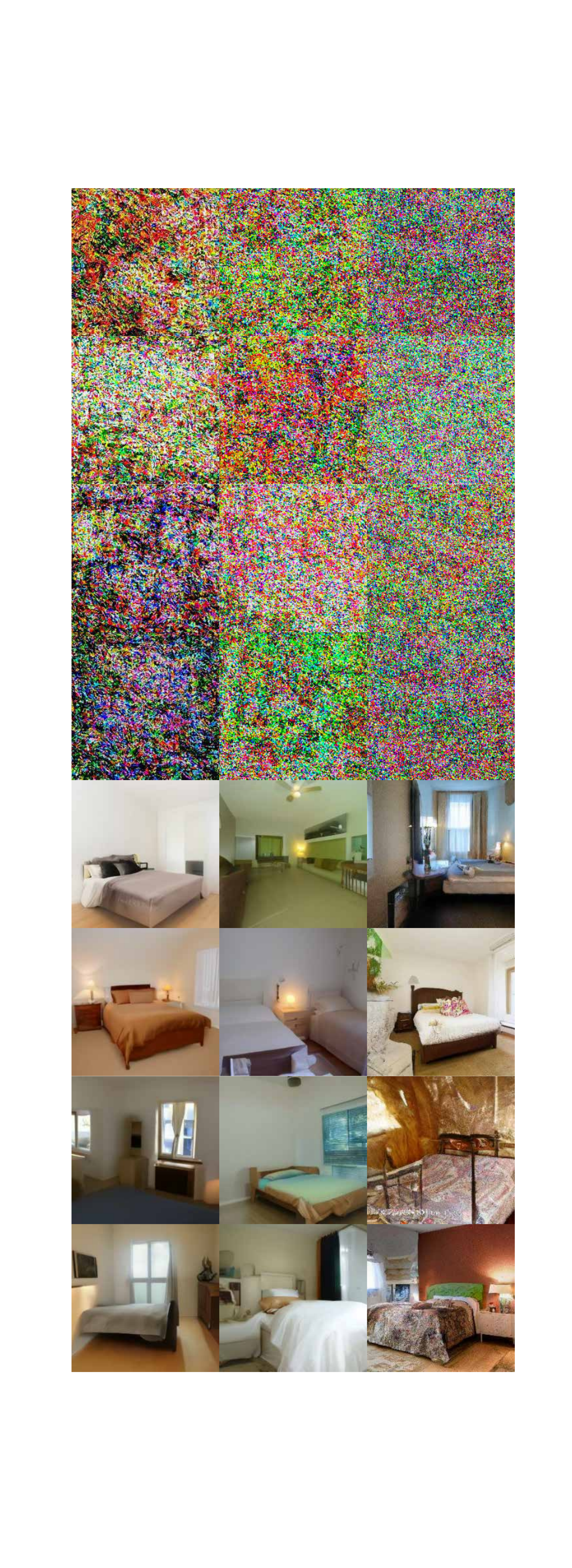}}
    \end{minipage}
    \vspace{-16ex}
           \caption{Sampling using NCSN++~\cite{song2020score} on LSUN (church and bedroom) at a resolution of $256 \times 256$ under different iteration numbers. It is observed that when the iteration number decreases, both the original method and our PDS generate samples with high-frequency noise, but the quality of the samples produced by the original method drops much more dramatically.}
        \label{fig: ncsnpp_lsun}
        \vspace{-6ex}
\end{figure}
\begin{figure}[h]
    \vspace{-1ex}
    \centering
    \begin{minipage}{1\linewidth}
        \textcolor{white}{------}\hspace{4ex}$T = 2000$\hspace{7.3ex}$T = 200$\hspace{7.3ex}$T = 133$\hspace{7.3ex}$T = 100$\hspace{7.3ex}$T = 66$
    \vspace{-7ex}
    \end{minipage}
    \begin{minipage}{1\linewidth}
        \rotatebox{90}{\textcolor{white}{------}\quad\quad\quad\quad\quad\quad Baseline~\cite{song2020score}} \hspace{-4.5ex}
        \centerline{\includegraphics[scale=0.15]{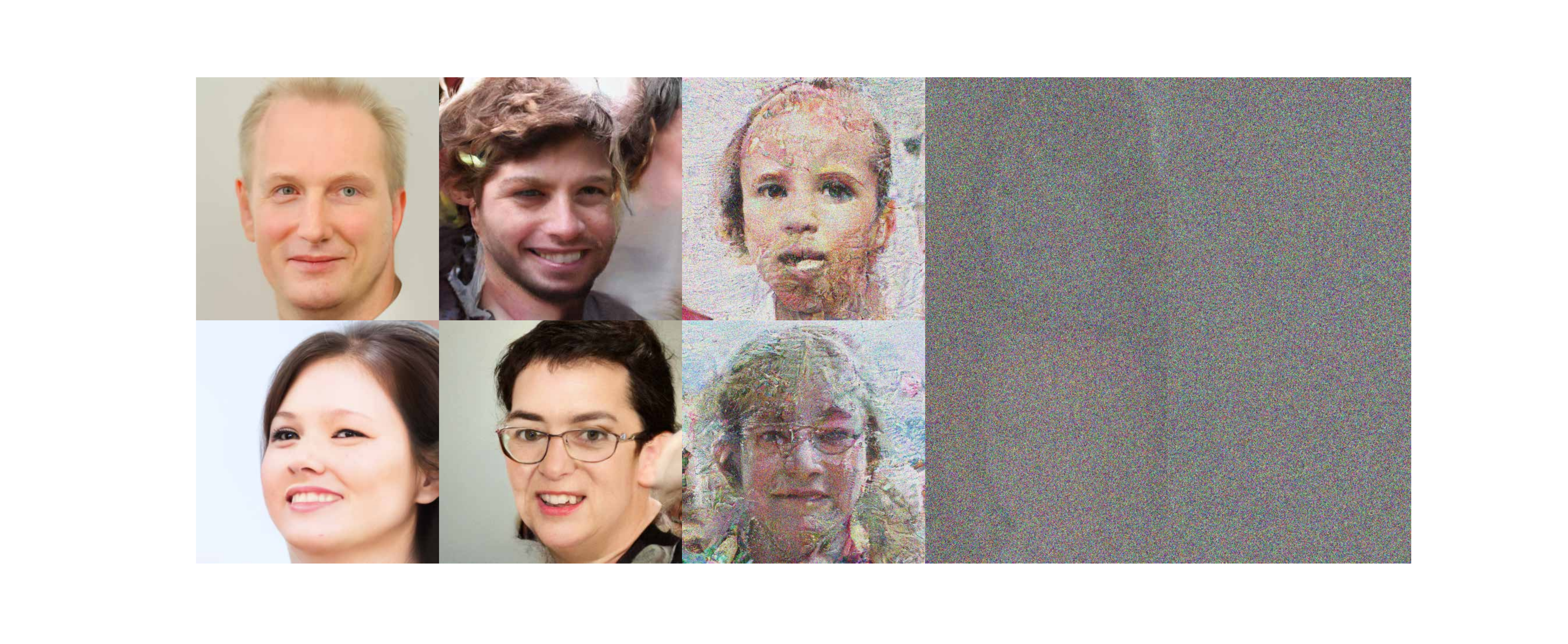}}
        \vspace{-10.75ex}
        
        \rotatebox{90}{\textcolor{white}{------} \quad\quad\quad \quad\quad\quad  Ours} \hspace{-3.69ex}
        \centerline{\includegraphics[scale=0.15]{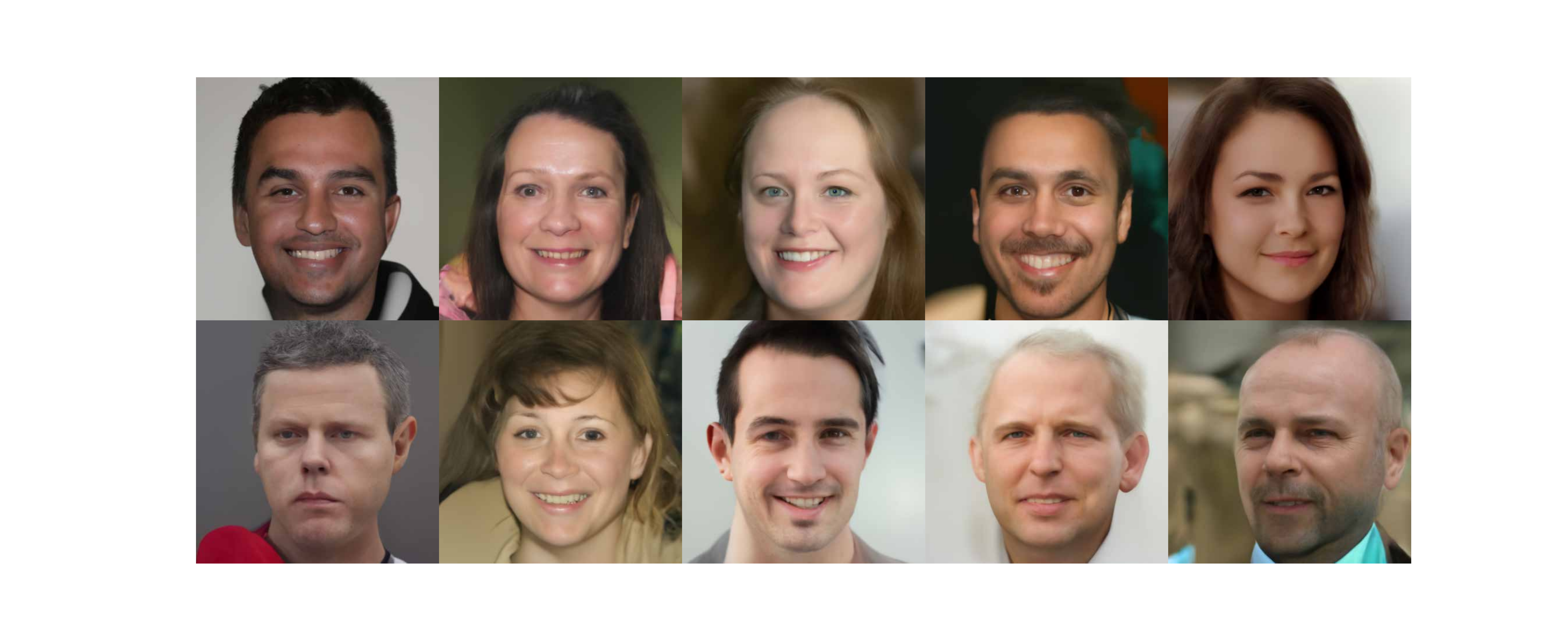}}
    \end{minipage}

    \vspace{-4ex}
        \caption{
        FFHQ~\cite{karras2019style} (facial images) at a resolution of $1024\times 1024$ generated by NCSN++~\cite{song2020score}
        under a variety of sampling iterations
        (top) without and (bottom) with our PDS. 
        It is evident that NCSN++ decades quickly with increasingly reduced sampling iterations,
        which can be well solved with PDS.
    		}
        \label{fig:ffhq_sm}
        \vspace{-4ex}
\end{figure}
\begin{figure}[h]
    \vspace{-22ex}
    \centering
    \begin{minipage}{1\linewidth}
        \centerline{\includegraphics[scale=0.26]{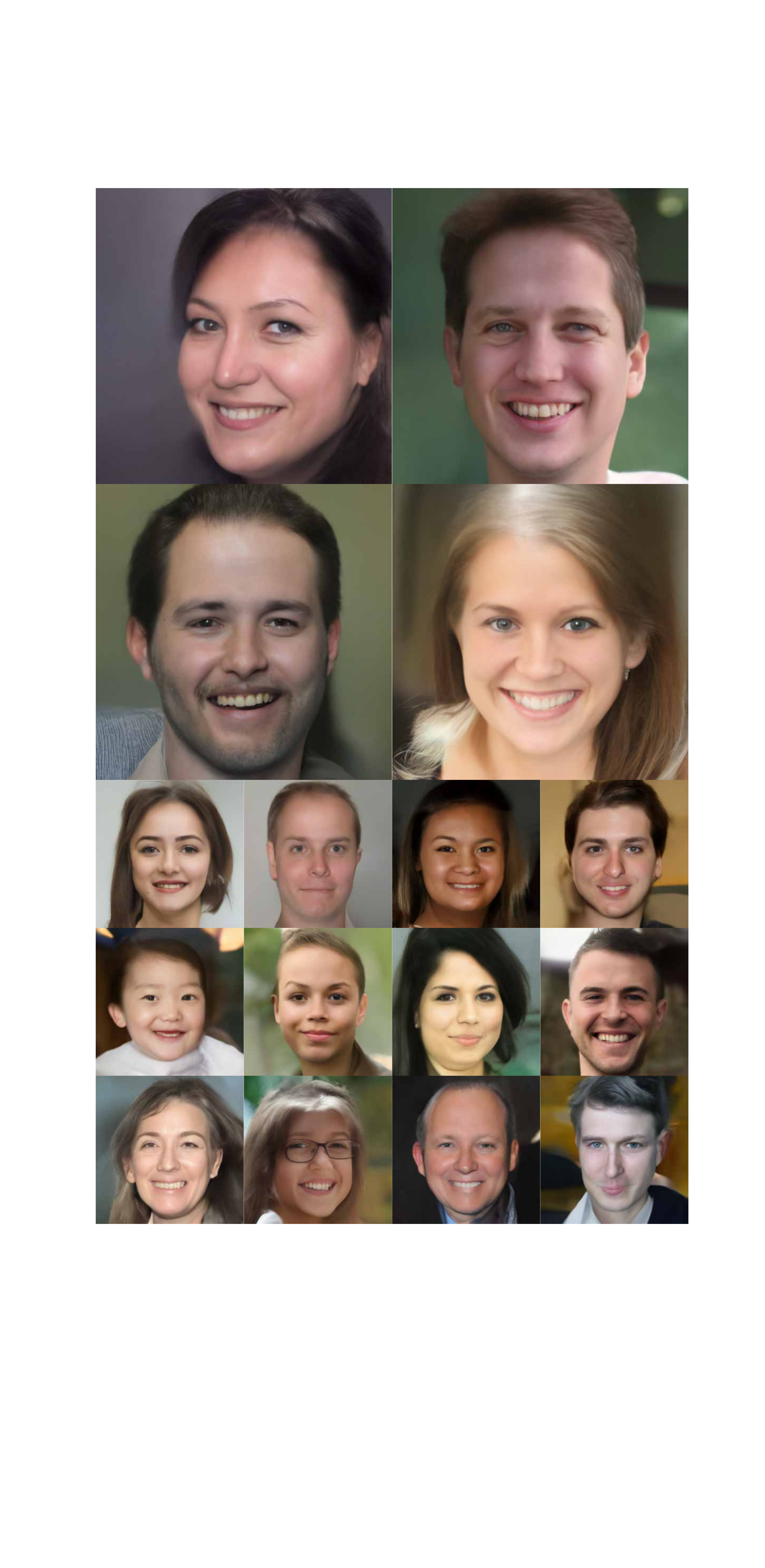}}
        \vspace{-10.75ex}
    \end{minipage}
    \vspace{-34ex}
        \caption{
        Facial images at a resolution of $1024\times 1024$ generated by NCSN++~\cite{song2020score}
        with our PDS.
        Sampling iterations: $66$.
        Dataset: FFHQ~\cite{karras2019style}.
        }
        \label{fig:ffhq_hq}
\end{figure}

\end{document}